\documentclass{article}

\usepackage[nonatbib,preprint]{neurips_2026}

\usepackage[numbers]{natbib}

\usepackage{xcolor}         %
\usepackage{microtype}
\usepackage[pagebackref=true,backref=true]{hyperref}
\renewcommand*{\backrefalt}[4]{%
    \ifcase #1 \footnotesize{(Not cited.)}%
    \or        \footnotesize{(Cited on page~#2.)}%
    \else      \footnotesize{(Cited on pages~#2.)}%
    \fi}
\usepackage{url}
\usepackage{booktabs}
\usepackage{lineno}

\definecolor{darkblue}{rgb}{0, 0, 0.5}
\hypersetup{colorlinks=true, citecolor=darkblue, linkcolor=darkblue, urlcolor=darkblue}

\usepackage[none]{hyphenat}
\usepackage{breakcites}
\usepackage[utf8]{inputenc} %
\usepackage[T1]{fontenc}    %
\usepackage{amsfonts}       %
\usepackage{nicefrac}       %
\usepackage{caption}
\usepackage[font={footnotesize}]{subcaption}
\usepackage{float}
\usepackage{amsmath,amsthm}
\usepackage{amssymb}
\usepackage{mathtools}
\usepackage{soul}
\usepackage{bm}
\usepackage{enumitem}
\setlist{leftmargin=*}
\usepackage{multirow}
\usepackage{wrapfig}
\usepackage{scalerel}
\allowdisplaybreaks
\usepackage{dsfont}
\usepackage[noend]{algpseudocode}
\usepackage{thmtools,thm-restate}
\usepackage{cleveref}  
\usepackage{crossreftools} %

\usepackage{autonum}
\usepackage[ruled,algo2e,linesnumbered,noend]{algorithm2e}%
\makeatletter
\patchcmd{\@algocf@start}%
  {-1.5em}%
  {0pt}%
  {}{}%
\makeatother
\SetAlCapFnt{\footnotesize} %
\SetAlCapNameFnt{\small} %

\usepackage{todonotes}      %

\newtheorem{remark}{Remark}

\newtheorem{definition}{Definition}

\newtheorem{assumption}{Assumption}
\newtheorem{theorem}{Theorem}
\newtheorem{corollary}{Corollary}
\newtheorem{lemma}{Lemma}
\newtheorem{proposition}{Proposition}

\DeclareMathAlphabet{\mathbsf}{OT1}{cmss}{bx}{n}%
\DeclareMathAlphabet{\mathssf}{OT1}{cmss}{m}{sl}%
\newcommand{\cut}[1]{}

\newcommand{\defeq}{\triangleeq}

\newcommand{\para}[1]{\noindent\tbf{#1}\ \ }

\newcommand{\indicator}{\mathbb{I}}

\newcommand{\eventnotag}{\mc{E}}
\newcommand{\event}[1][]{\eventnotag_{#1}}

\newcommand{\x}{\mbi{x}}

\newcommand{\z}{\mbi{z}}

\newcommand{\attention}{\textsc{Attention}}

\newcommand{\kernel}{\kappa}

\newcommand{\rkhs}{\mc{H}_{\kernel}}

\newcommand{\knorm}[1]{\Vert{#1}\Vert_{\kernel}}
\newcommand{\kin}[1][n]{\Vert{\kernel}\Vert_{#1}} %
\newcommand{\kattin}[1][n]{\Vert{\katt}\Vert_{#1}} %

\newcommand{\coreset}[1][j]{\mathcal{S}^{(#1)}}

\SetKwProg{Method}{def}{}{}
\newcommand{\compresstwo}{\hyperref[algo:compress2]{\color{black}{\textsc{Compress2}}}\xspace}

\newcommand{\concat}{\texttt{\textup{concatenate}}}

\newcommand{\brackets}[1]{\left[ #1 \right]}

\newcommand{\parenth}[1]{\left( #1 \right)}

\newcommand{\braces}[1]{\left\{ #1 \right \}}

\newcommand{\abss}[1]{\left| #1 \right |}

\newcommand{\ceil}[1]{\left\lceil #1 \right \rceil}

\newcommand{\floor}[1]{\left\lfloor #1 \right \rfloor}

\newcommand{\sig}{\sigma}

\def\balign#1\ealign{\begin{align}#1\end{align}}
\def\baligns#1\ealigns{\begin{align*}#1\end{align*}}
\def\balignat#1\ealign{\begin{alignat}#1\end{alignat}}
\def\balignats#1\ealigns{\begin{alignat*}#1\end{alignat*}}
\def\bitemize#1\eitemize{\begin{itemize}#1\end{itemize}}
\def\benumerate#1\eenumerate{\begin{enumerate}#1\end{enumerate}}

\newenvironment{talign*}
 {\csname align*\endcsname}
 {\endalign}
\newenvironment{talign}
 {\csname align\endcsname}
 {\endalign}

\def\balignst#1\ealignst{\begin{talign*}#1\end{talign*}}
\def\balignt#1\ealignt{\begin{talign}#1\end{talign}}
\newcommand{\qtext}[1]{\quad\text{#1}\quad}

\let\originalleft\left
\let\originalright\right
\renewcommand{\left}{\mathopen{}\mathclose\bgroup\originalleft}
\renewcommand{\right}{\aftergroup\egroup\originalright}

\def\tinycitep*#1{{\tiny\citep*{#1}}}
\def\tinycitealt*#1{{\tiny\citealt*{#1}}}
\def\tinycite*#1{{\tiny\cite*{#1}}}
\def\smallcitep*#1{{\scriptsize\citep*{#1}}}
\def\smallcitealt*#1{{\scriptsize\citealt*{#1}}}
\def\smallcite*#1{{\scriptsize\cite*{#1}}}

\def\mbi#1{\boldsymbol{#1}} %
\def\mbf#1{\mathbf{#1}}
\def\mbb#1{\mathbb{#1}}
\def\mc#1{\mathcal{#1}}

\def\tbf#1{\textbf{#1}}

\def\wtil#1{\widetilde{#1}}
\def\mfk#1{\mathfrak{#1}}

\def\reals{\mathbb{R}} %

\def\naturals{\mathbb{N}} %

\def\<{\left\langle} %
\def\>{\right\rangle}

\def\defeq{\triangleq} %
\def\half{\frac{1}{2}}
\def\quarter{\frac{1}{4}}
\def\norm#1{\left\|{#1}\right\|} %
\newcommand{\twonorm}[1]{\staticnorm{#1}_2} %
\newcommand{\infnorm}[1]{\staticnorm{#1}_{\infty}} %
\newcommand{\maxnorm}[1]{\staticnorm{#1}_{\max}} %
\newcommand{\opnorm}[1]{\staticnorm{#1}_{\operatorname{op}}} %
\newcommand{\fronorm}[1]{\norm{#1}_{F}} %
\newcommand{\specnorm}[1]{\opnorm{#1}} %
\newcommand{\rownorm}[1]{\staticnorm{#1}_{2,\infty}} %

\def\staticnorm#1{\|{#1}\|} %
\newcommand{\inner}[2]{\langle{#1},{#2}\rangle} %
\def\indic#1{\indicator\left[{#1}\right]} %
\def\E{\mbb{E}} %

\def\Esub#1{\E_{#1}}

\newcommand{\Unif}{\textnormal{Unif}}

\ifdefined\nonewproofenvironments\else
\ifdefined\ispres\else
\newenvironment{proof-sketch}{\noindent\textbf{Proof Sketch}
  \hspace*{1em}}{\qed\bigskip\\}
\newenvironment{proof-idea}{\noindent\textbf{Proof Idea}
  \hspace*{1em}}{\qed\bigskip\\}
\newenvironment{proof-of-lemma}[1][{}]{\noindent\textbf{Proof of Lemma {#1}}
  \hspace*{1em}}{\qed\\}
\newenvironment{proof-of-theorem}[1][{}]{\noindent\textbf{Proof of Theorem {#1}}
  \hspace*{1em}}{\qed\\}
\newenvironment{proof-attempt}{\noindent\textbf{Proof Attempt}
  \hspace*{1em}}{\qed\bigskip\\}

\newcommand{\cset}{\mc{S}}
\newcommand{\alg}{\textsc{Alg}\xspace}

\newcommand{\inputcoreset}{\cset_{\textup{in}}}

\newcommand{\halve}{\textsc{Halve}\xspace}

\newcommand{\errsymb}{\ensuremath{\mfk{h}}\xspace}
\newcommand{\compress}{\textsc{Compress}\xspace}

\newcommand{\nin}{n_{\textup{in}}}
\newcommand{\nout}{n_{\textup{out}}}

\newcommand{\halvetag}{\textsc{H}}
\newcommand{\compresstag}{\textsc{C}}

\newcommand{\subsampletag}{{\textsc{S}}}
\newcommand{\expresstag}{{\textsc{E}}}
\newcommand{\wattntag}{{\textsc{W}}}

\newcommand{\ncref}[1]{\cref{#1}: \nameref*{#1}} %
\newcommand{\pcref}[1]{%
  Proof of 
  \texorpdfstring{%
    \ncref{#1}%
  }{%
    \crtcref{#1}: \protect\nameref*{#1}%
  }%
} %

\newcommand{\subg}{\nu}

\newcommand{\V}{\mbf V}

\newcommand{\attout}{\mbi{o}}
\newcommand{\attouthat}{\hat{\mbi{o}}}

\newcommand{\ohyp}[1][n]{\attouthat_{\textrm{hyp},#1}}
\newcommand{\oexp}[1][n]{\attouthat_{\textrm{exp},#1}}
\newcommand{\obal}[1][n]{\attouthat_{\textrm{bal},#1}}

\newcommand{\Xset}{\mathcal{X}}

\renewcommand{\hat}[1]{\widehat{#1}}

\newcommand{\katt}{\kernel_{\mathrm{att}}}

\newcommand{\key}{\mbi{k}}

\newcommand{\val}{\mbi{v}}

\newcommand{\query}{\mbi{q}}

\newcommand{\subsample}{\hyperref[algo:subsample]{\color{black}{\textsc{Subsample}}}\xspace}

\newcommand{\kh}{\hyperref[algo:khd]{\color{black}{\textsc{KH}}}\xspace}
\newcommand{\khd}{\text{$\kh(\delta)$}\xspace}

\newcommand{\xset}{\mc X}
\newcommand{\xin}{\xset_{\textup{in}}}

\newcommand{\xout}{\xset_{\textup{out}}}

\newcommand{\mbar}{\overline{m}}

\newcommand{\thresh}{\mathfrak{a}}
\newcommand{\multiplier}{\mathfrak{b}}

\newcommand{\mwedge}{m\wedge\mbar}

\newcommand{\express}{\hyperref[algo:express]{\color{black}{\textsc{Express}}}\xspace}
\newcommand{\khexpress}{\hyperref[khexpress-guarantees]{\color{black}{\textsc{KH-Express}}}\xspace}
\newcommand{\ours}{\hyperref[algo:thinformer]{\color{black}{Thinformer Express}}\xspace}
\newcommand{\khexpressd}{\text{$\khexpress(\delta)$}\xspace}
\newcommand{\wattn}{\hyperref[algo:wattn]{\color{black}{\textsc{WtdAttention}}}\xspace}

\newcommand{\oexpress}{\hyperref[algo:express-recursive]{\color{black}{\textup{Offline-\textsc{Express}}}}\xspace}

\newcommand{\ocompresstwo}{\hyperref[algo:compress2-recursive]{\color{black}{\textup{Offline-\textsc{Compress2}}}}\xspace}

\newcommand{\osubsample}{\hyperref[algo:subsample-recursive]{\color{black}{\textup{Offline-\textsc{Subsample}}}}\xspace}

\makeatletter
\newcommand\footnoteref[1]{\protected@xdef\@thefnmark{\ref{#1}}\@footnotemark}
\makeatother

\newcommand{\dstar}[1][j]{\delta_{#1}^\star}

\newcommand{\even}{\textup{even}\xspace}

\newcommand{\subgh}[1][j]{\subg_{\halvetag_{#1}}}
\newcommand{\subge}{\subg_{\expresstag}}

\newcommand{\subgw}{\subg_{\wattntag}}

\newcommand{\psie}{\psi_{\expresstag}}

\newcommand{\sige}{\sig_{\expresstag}}

\newcommand{\stratum}{\Xset_{m,k}}
\newtheorem{myproposition}{Proposition}

\newtheorem{mycorollary}{Corollary}

\newtheorem{mylemma}{Lemma}

\newtheorem{mydefinition}{Definition}

\newtheorem{myremark}{Remark}

\newtheorem{myassumption}{Assumption}

 \crefname{appendix}{App.}{App.}
\crefname{equation}{}{}
\crefname{lemma}{Lem.}{Lems.}
\crefname{theorem}{Thm.}{Thms.}
\crefname{corollary}{Cor.}{Cors.}
\crefname{example}{Ex.}{Exs.}
\crefname{algorithm}{Alg.}{Algs.}
\crefname{algocf}{Alg.}{Algs.} %

\crefname{section}{Sec.}{Secs.}
\crefname{table}{Tab.}{Tabs.}
\crefname{remark}{Rem.}{Rems.}
\crefname{definition}{Def.}{Defs.}
\crefname{Proposition}{Prop.}{Props.}
\crefname{myremark}{Rem.}{Rems.}
\crefname{mylemma}{Lem.}{Lems.}
\crefname{mydefinition}{Def.}{Defs.}
\crefname{myproposition}{Prop.}{Props.}
\crefname{mycorollary}{Cor.}{Cors.}
\crefname{myassumption}{Assum.}{Assums.}
\crefname{figure}{Fig.}{Figs.}
\crefname{enumi}{}{}
\crefname{name}{}{} %

\title{Express Language Modeling}
\author{%
  Albert Gong \\
  Cornell Tech\\
  \texttt{agong@cs.cornell.edu} \\
  \And
  Annabelle Michael Carrell \\
  University of Cambridge \\
  \texttt{ac2411@cam.ac.uk} \\
  \AND
  Raaz Dwivedi \\
  Cornell Tech \\
  \texttt{dwivedi@cornell.edu} \\
  \And
  Lester Mackey \\
  Microsoft Research \\
  \texttt{lmackey@microsoft.com} %
}

\begin{document}

\maketitle

\begin{abstract}
We introduce a new tool, Express, for converting a non-causal attention approximation into a causal approximation with matching approximation guarantees. When combined with the state-of-the-art Thinformer approximation, Express improves upon the best known causal attention guarantees, delivering $\log^{3/2}(n)/s$ approximation error with only $O(s)$ memory and $O(s^2 \log^2(n))$ compression overhead for a sequence of length $n$. We pair these developments with an efficient I/O-aware Triton implementation, demonstrate substantial speedups over FlashAttention 2, and use Express to overcome four resource bottlenecks in the language modeling pipeline: long-context prefill, KV cache compression,
long-form memory-constrained decoding, and long-form compute-constrained decoding. 
\end{abstract}

\section{Introduction}
\label{sec:intro}

Attention~\citep{vaswani2017attention} scales quadratically with the sequence length $n$, a cost that is especially acute for language models processing large contexts, generating large token counts, or operating under resource constraints~\citep{bai2024longbench,guo2025deepseek,dao2022flashattention}. 
A variety of practical sub-quadratic attention approximations are now available with strong guarantees for reconstructing the attention output~\citep{zandieh2023kdeformer,han2024hyperattention,carrell2025low,han2025streaming,schroder2026wildcat}, but the majority, including the state-of-the-art {thinning} approximation~\citep{carrell2025low}, only tackle the unmasked attention problem. The harder case, central to language modeling, is \emph{attention with causal masking} where each query attends only over an evolving context of prior key-value pairs. 
This setting governs every stage of language model inference: prefilling long contexts, sequentially decoding new tokens, and managing the key-value cache that grows with each generated token. 

\para{Our contributions} To bridge this gap, we introduce a new meta-procedure, \express, that efficiently turns a high-quality unmasked attention approximation into a high-quality masked approximation. More generally, \express transforms any offline approximation based on thinning  (i.e., subselecting) input points into an efficiently updatable weighted cache with streaming approximation guarantees. 
We pair \express with the state-of-the-art non-causal thinning approximation, Thinformer~\citep{carrell2025low}, and derive causal attention guarantees that improve upon the best in the literature~\citep{han2024hyperattention,han2025streaming}: for bounded inputs, \emph{Thinformer Express} yields $\log^{3/2}(n)/s$ approximation error with only $O(s)$ memory and $O(s^2 \log(s)\log(n/s))$ compression overhead. 
Inspired by FlashAttention 2~\citep{dao2022flashattention,dao2024flashattention}, we then develop an efficient I/O-aware Triton~\citep{tillet2019triton} implementation and 
benchmark \ours in four resource-constrained language modeling settings: 
\begin{enumerate}\itemsep0em
    \item \emph{Long-context prefill}: %
    \ours achieves an $82\times$ speedup over FlashAttention~2~\citep{dao2024flashattention} at 512K tokens and uniformly improves both the perplexity and speedup of HyperAttention~\citep{han2024hyperattention}.
    \item \emph{Accelerated KV cache compression}:
    Pairing \ours with each of three leading KV cache compression methods---SnapKV~\citep{li2024snapkv}, StreamingLLM~\citep{xiao2024streamingllm}, and PyramidKV~\citep{cai2025pyramidkv}---reduces total attention runtime on LongBench-E long-context language understanding tasks~\citep{bai2024longbench} without sacrificing accuracy.
    \item \emph{Memory-efficient long-form decoding}:
    On competition-level MATH-500 problems that benefit from step-by-step reasoning~\citep{lightman2023let}, \ours dominates the accuracy-vs.-cache-size tradeoff curves of five leading alternatives, matching the accuracy of exact attention with only $61\%$ of the cache size.
    \item \emph{Accelerated long-form decoding}:
    On MATH-500, \ours also dominates the accuracy-vs.-runtime tradeoff curves of five leading alternatives, matching the accuracy of exact attention with only $56\%$ of the computation time.
\end{enumerate}

\para{Paper organization}
\Cref{sec:background} provides a formal introduction to thinning algorithms and the causal attention problem.
\Cref{sec:express} presents the \express meta-procedure and \ours, along with their quality, space, and runtime guarantees.
\Cref{sec:experiments} presents our Triton implementation and empirical evaluation.
\Cref{sec:conclusion} closes with a discussion of limitations and broader impact.

\para{Notation} We define $[n]\defeq\{1,\dots,n\}$,
$\E_{\event}[X] \defeq \E[X\indic{\event}]$ for any probability event $\event$ and integrable random variable $X$, and $(\opnorm{\cdot},\fronorm{\cdot},\rownorm{\cdot})$ as the spectral, Frobenius, and maximum Euclidean row norm of a matrix respectively.

\section{Background on Attention and Thinning}\label{sec:background}
\subsection{Unmasked attention}
Unmasked scaled dot-product attention takes as input a sequence of query, key, and value vectors $(\query_i,\key_i,\val_i)_{i=1}^n$ in $\reals^d$ and computes, for each $j\in[n]$, the unmasked attention output \citep{vaswani2017attention} 
\begin{talign}
\label{unmaskedatt}
\attention(\query_j,
           (\key_i, \val_i)_{i=1}^n)
    \defeq
\frac{\frac{1}{n}\sum_{i=1}^{n} \exp(\inner{\query_j}{\key_i}/\sqrt{d})\, \val_i}{\frac{1}{n}\sum_{i=1}^{n} \exp(\inner{\query_j}{\key_i}/\sqrt{d})}, 
\end{talign}
a ratio of two query-specific averages taken over all key-value pairs. 
To avoid the $\Theta(n^2d)$ cost of 
computing each output exactly, 
prior work 
attended only over a relatively small coreset of key-value pairs designed to accurately approximate the  attention output \cref{unmaskedatt} for all queries \citep{carrell2025low}. 
To achieve this, \citet{carrell2025low} employed \emph{sub-Gaussian thinning algorithms} which generate coresets guaranteed to accurately preserve the sorts of averages that arise in the attention computation.
\begin{definition}[\tbf{Sub-Gaussian thinning and halving}]\label{def:alg-subg}
A \emph{thinning algorithm} $\alg$ takes a dataset $\xin = (\x_i)_{i=1}^{\nin}$ as input and outputs a weighted coreset $\xout$ of $\nout$ points $\x\in\xin$ paired with positive weights $w$ that sum to $1$. When $\nout=\floor{n/2}$ and all weights are equal, we call \alg a \emph{halving algorithm}. 
We say $\alg$ is \emph{$(\kernel,\subg,\delta)$-sub-Gaussian} given $\xin$ %
if $\subg \ge 0$, $\delta \in [0,1)$, 
$\kernel$ is a kernel with reproducing kernel Hilbert space (RKHS) $\rkhs$ and norm $\knorm{\cdot}$ \citep[Def.~4.18]{Steinwart2008SupportVM}, 
and there exists an event $\mc E$ with probability at least $1 - \delta/2$ such that
\begin{talign}\label{eq:subg_def}
\Esub{\event}\brackets{\exp\parenth{ \frac{1}{\nin}\sum_{i=1}^{\nin} f(\x_i) - \sum_{(\x,w)\in\xout} w\,f(\x)}} \leq \exp\parenth{\frac{\subg^2}{2} \knorm{f}^2}
    \qtext{for all} 
f \in \rkhs.
\end{talign}
For any event $\event$, we say \alg is \emph{$(\kernel,\subg)$-sub-Gaussian on $\event$} given $\xin$ if \cref{eq:subg_def} holds.
\end{definition}
The sub-Gaussian constant $\subg$ controls the error of a thinning algorithm, ensuring that, with high probability,  expensive averages over the input are $O(\subg)$-close to cheap averages over the coreset. 

Perhaps the most familiar sub-Gaussian thinning algorithm is uniform subsampling: sampling $\nout$ equal-weighted points without replacement is $(\kernel,\subg,0)$-sub-Gaussian for any kernel $\kernel$ with $\subg=\sqrt{{\kin}/{\nout}}$ and $\kin\defeq \max_{i\in[n]}\kappa(\x_i,\x_i)$ \citep[Prop.~B.1]{carrell2025low}. 
However, uniform summaries are not especially concise as they require $\nout=10000$ points for $1\%$ relative error \citep[Prop.~1]{carrell2025low}.
Fortunately, substantially more accurate thinning algorithms are also available. For example, kernel halving~\citep[\khd,][]{dwivedi2024kernel,dwivedi2021generalized,carrell2025low}  achieves $(\kernel,\subg,\delta)$-sub-Gaussianiaty for $\subg = \sqrt{\kin\log(2\nin/\delta)}/\nout$ by balancing kernel averages using non-uniform randomness.

The kernel $\kernel$ in \cref{def:alg-subg} dictates which types of averages are approximated well. 
\citet[Thm.~2]{carrell2025low} introduced a key-value kernel that mimics the summands in the attention mechanism \cref{unmaskedatt}, 
\begin{talign}\label{eq:katt}
\katt((\key,\val),(\key',\val')) \defeq \exp(\inner{\key}{\key'}/\sqrt{d})(\inner{\val}{\val'}+v_{\max}^2)
\qtext{for}
v_{\max}\defeq\max_{i\in[n]}\infnorm{\val_i},
\end{talign}
and showed that thinning the set of key-value pairs $\xin = (\key_i,\val_i)_{i=1}^n$ with $\katt$ 
provides an $O(\subg)$-accurate approximation to the unmasked attention output \cref{unmaskedatt} for all queries simultaneously. Our aim is to extend this strong guarantee to the more challenging causal masking setting that arises in language modeling.

\subsection{Attention with causal masking}
Scaled dot-product attention with causal masking computes, for each $j\in[n]$, the masked  output~\citep{vaswani2017attention}
\begin{talign}
\label{def:att}
\attout_j
    &\defeq
\attention(\query_j,
           (\key_i, \val_i)_{i=1}^j)
    \defeq
\frac{\frac{1}{j}\sum_{i=1}^{j} \exp(\inner{\query_j}{\key_i}/\sqrt{d})\, \val_i}{\frac{1}{j}\sum_{i=1}^{j} \exp(\inner{\query_j}{\key_i}/\sqrt{d})},
\end{talign}
where each query $\query_j$ now attends only over key-value pairs with indices $i\leq j$. 
Two variants of this problem commonly arise in language modeling. In the \emph{prefill} or context-processing phase, we observe and hence can process all $n$  query-key-value tuples at once, as each corresponds to a token in a fully observed context sequence. In the even more challenging \emph{decoding} or generation phase, we must generate outputs incrementally as each tuple $(\query_{j},\key_{j},\val_{j})$ is generated on-the-fly based on the prior output scores $(\attout_i)_{i=1}^{j-1}$. In this setting, even the sequence length $n$ may not be known in advance. 
In either case, computing all $n$ outputs $(\attout_i)_{i=1}^n$ using standard matrix multiplication requires $\Theta(d\,n^2)$ computation, a cost that quickly becomes prohibitive as $n$ grows. Moreover, by default, in the decoding phase, all prior keys and values are stored in a KV cache to allow future tokens to attend over them. This $\Theta(d\,n)$ memory cost also quickly becomes prohibitive for models deployed on resource-constrained devices. 

To overcome both resource bottlenecks, we would ideally exploit sub-Gaussian thinning once again to generate a small key-value coreset suitable for approximating each attention output \cref{def:att}. 
However, the causal mask and the incremental constraint of decoding imply that a single coreset will no longer suffice. Instead we will generate a sequence of coresets $(\xout(j))_{j=1}^n$ and use $\xout(j)$ to form a weighted attention approximation of the $j$-th attention output:
\begin{talign}
\label{def:att_approx}
\attouthat_j
    \defeq
\frac{\sum_{((\key,\val),w) \in \xout(j)} w\, \exp(\inner{\query_j}{\key}/\sqrt{d})\, \val}{\sum_{((\key,\val),w) \in \xout(j)} w\, \exp(\inner{\query_j}{\key}/\sqrt{d})}.
\end{talign}
The key challenge lies in efficiently updating each coreset in a manner that controls coreset size, memory usage, runtime, and output approximation error across the entire sequence of inputs. 
To achieve this, we introduce a new meta-procedure, \express, that efficiently converts any sub-Gaussian halving algorithm into a streaming sub-Gaussian thinning algorithm suitable for causal attention approximation.
\section{Express Language Modeling}\label{sec:express}
\subsection{Express }
\begin{wrapfigure}{r}{0.4\textwidth}
    \vspace{-4\baselineskip}
    \includegraphics[trim={.5cm 1.3cm .5cm 1.1cm}, clip=true, width=\linewidth]{figures/express_output.pdf}
    \vspace{-2\baselineskip}
\end{wrapfigure}

\express (\cref{algo:express}) can be viewed as an updatable weighted cache object that is initialized with a target cache size $\nout$, an inflation factor $\mbar$, and a halving algorithm $\halve$. 
Initially, the cache is empty, and updates proceed in three phases. 
In the exact phase, the first $\nout$ points processed are stored directly in the \express coreset $E$. 
In the thin phase, points are batched into groups of size $2^m\nout$ and incrementally thinned down to size $\nout$ before entering $E$. 
Whenever the size of $E$ reaches $4\nout$, we thin its size back down to $\nout$ by applying \halve twice (the \halve phase) and increase the thinning factor $m$.

In the thin phase, one could imagine directly applying \halve to carry out the thinning, but high-quality halving algorithms like \khd have a quadratic complexity that would render the update too slow. Instead we develop a more efficient strategy: we first uniformly \subsample from size $2^m\nout$ to $2^{\mbar}\nout$ and then incrementally thin from $2^{\mbar}\nout$ to $\nout$ using \compresstwo, a generalization of the \compress algorithm of \citet{shetty2022distribution} that reduces thinning time while preserving quality by only applying \halve to small groups of points with geometrically increasing size. 
Throughout, we choose the inflation factor $\mbar \in \naturals$ so that $2^{\mbar}=\Theta(\nout)$ and $2^{\mbar-1}$ divides $\nout$ evenly.  
Our first theorem, proved in \cref{proof:express-guarantees}, bounds the maximum cache size, space complexity, and runtime of \express in terms of its halving algorithm.\footnote{For prefill, we provide a complementary memory bound for a recursive offline version of \express in \cref{app:express_recursive_storage_cost}.}

\newcommand{\keep}{\textup{keep}}
\newcommand{\wtdcache}{\textup{wtd\_cache}}
\newcommand{\update}{\textup{update}}
\begin{algorithm2e}[t!]
\DontPrintSemicolon
\caption{\small $\express(\nout, \mbar, \halve)$: Streaming compression with target cache size $\nout$, inflation factor $\mbar$} %
\label{algo:express}
\small
\BlankLine
// Initialize the thinning factor $m$, datapoint counter $n$,  \compresstwo counter $\ell$, and empty \express coreset \\
$m \gets 0$;\quad $n \gets 0$;\quad $\ell \gets 0$; \quad $E \gets \braces{}$ \\
    // Initialize thinning objects: $S$ thins from size $2^m\nout$ to $2^{\mwedge}\nout$; $C$ thins  from size $2^{\mwedge}\nout$ to $\nout$ \\ %
    $S \gets \subsample(m,\mbar)$;\quad
    $C \gets \compresstwo(\mwedge, 2^{\mwedge}\nout, \halve_m)$\;
\BlankLine
\Method{\update$(\x)$:}{
    {// Exact phase: store initial $\nout$ points in \express coreset} \\
    $n \gets n + 1$;\quad \lIf{$n \leq \nout$}{$E.\textup{append}(\x)$; \textbf{return}}
    \BlankLine
    // Thin phase: thin next $2^m\nout$ points down to $\nout$ with \subsample + \compresstwo \\
    $\ell \gets \ell + 1$;\quad
    \lIf*{$S.\keep(\x)$}{$C.\update(\x)$;}
    \quad\lIf{$\ell == 2^m\nout$}{$E\gets E\cup C.\cset_{m\wedge\mbar}$; $\ell \gets 0$}\label{line:express-store-compress2}
    \BlankLine
    // \halve phase: \halve  \express coreset twice to reach size $\nout$ and update thinning factor $m$\\ 
    \lIf{$n == 4\cdot 2^m \nout$}{
        $E \gets \halve_{m,m}(\halve_{m,m}(E))$;\quad $m\gets m+2$\label{line:express-warm-halve}
        }
    \BlankLine
    // Reset thinning objects \\
    \lIf{$\ell == 0$}{
        $S \gets \subsample(m,\mbar)$;\quad 
        $C \gets \compresstwo(m \wedge \mbar, 2^{m \wedge \mbar}\nout, \halve_m)$}
}
\BlankLine
\Method{\wtdcache$()$:}{
    \KwRet{$C.\wtdcache().\concat((E, 1))$}
    \quad // Cache of coresets with associated weights
}
\end{algorithm2e}
\begin{figure}[t!]
\begin{minipage}[t]{0.48\textwidth}
\begin{algorithm2e}[H]
\DontPrintSemicolon
\caption{\small $\compresstwo(m, \nin, \halve)$}
\label{algo:compress2}
\small
// \halve points in groups of size $\big(\frac{2^i \nin }{4^{m-1}}\big)_{i=0}^m$
\BlankLine
// Initialize counter and coresets for each halving level \\
$n \gets 0$;\quad \lFor{$i = 0, 1, \ldots, m$}{$\cset_i \gets \braces{}$}
\BlankLine
\Method{\update$(\x)$:}{
    $n \gets n + 1$;\quad 
    $\cset_0 \gets \cset_0 \cup \braces{\x}$\;
    \For{$i = 0, \ldots, m-1$}{
        \If{$\abss{\cset_i} == \nin \cdot 2^{i} / 4^{m-1}$}{  \label{line:cset-i}
            $\!\!\cset_{i+1} \!\gets\! \cset_{i+1} \cup \halve_i(\cset_i)$;\label{line:compress2-halve}
            $\cset_i\!\gets\! \braces{}$\;
        }
    }
}
\BlankLine
\Method{\wtdcache$()$:}{    \KwRet{$[(\cset_i,\, 2^{i-m})]_{i=0}^{m}$}\quad // Weighted cache
}
\end{algorithm2e}
\end{minipage}%
\hfill
\begin{minipage}[t]{0.48\textwidth}
\begin{algorithm2e}[H]
\DontPrintSemicolon
\caption{\small \textsc{Subsample}($m, \mbar$)}
\label{algo:subsample}
\small
// Keep one uniform index from each group of $2^{m-\mbar}$\\
$n \gets 0$;\quad $i \gets 0$\;
\BlankLine
\Method{\textup{keep}$(\x)$:}{
    $n \gets n + 1$\;
    \lIf{$m \leq \mbar$}{\textbf{return}~\texttt{True}}
    \If{$(n-1) \bmod 2^{m - \mbar} == 0$}{
        $i\gets \Unif([n, n + 2^{m - \mbar}))$\;
    }
    \KwRet{$n == i$}\;
}
\end{algorithm2e}
\end{minipage}
\end{figure}
\begin{theorem}[\express memory and runtime]
\label{express-guarantees}
Consider running \express (\cref{algo:express}) with  $n$ updates. If \halve with input size $\nin$ has space complexity $s_{\halvetag}(\nin)$ and runtime $r_{\halvetag}(\nin)$, then \express has 
weighted cache size at most $6\nout$, space complexity at most $6\nout d + s_{\halvetag}(4\nout)$, and 
runtime 
\begin{talign}\label{eq:express-runtime}
r_{\expresstag}(n) 
    &\leq 
(
r_{\halvetag}(4 \nout) + r_{\halvetag}(2 \nout)
    +
3r_{\compresstag}\parenth{\mbar}
    +
2^{\mbar}3\nout)\,\ceil{\log_4(n/\nout)}\,\indic{n\geq 4\nout}
\end{talign}
where %
$r_{\compresstag}(m) \defeq \sum_{j=0}^{m-1} 4^{j} r_{\halvetag}\parenth{2^{1-j}\nout }$ is the runtime of $\compresstwo(m,2^m\nout,\halve)$.
\end{theorem}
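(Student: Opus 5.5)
The plan is to analyze \express epoch by epoch, where an epoch is a maximal stretch of updates with a fixed thinning factor $m\in\{0,2,4,\dots\}$. The first step is an invariant, proved by induction on $m$. When $n=2^m\nout$ (the end of the exact phase for $m=0$, or just after a \halve phase), $E$ holds exactly $\nout$ points, $\ell=0$, and fresh objects $S,C$ have been created. During the epoch the next $3\cdot 2^m\nout$ updates form three batches of $2^m\nout$ points each. \subsample keeps $2^{\mwedge}\nout$ points of each batch, and \compresstwo thins these to $\nout$ points that are appended to $E$ (\cref{line:express-store-compress2}).

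Consequently $|E|$ runs through $\nout,2\nout,3\nout,4\nout$. On the very update where $|E|=4\nout$ we have $n=4\cdot 2^m\nout$, so two halvings restore $|E|=\nout$ and $m$ increases by $2$, which re-establishes the invariant. Two consequences follow. The number of \halve phases up to time $n$ is at most $\lceil\log_4(n/\nout)\rceil$, and none occurs if $n<4\nout$. Also, whenever $C$ is partially filled, $|E|\le 3\nout$.

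For the cache-size bound it therefore suffices to show that the \compresstwo weighted cache $[(\cset_i,2^{i-m'})]_{i=0}^{m'}$, with $m'=\mwedge$, holds at most $3\nout$ points. The crude bound takes, for each $i<m'$, the threshold $\nin 2^i/4^{m'-1}=4\cdot 2^{i-m'}\nout$ minus one, plus $|\cset_{m'}|\le\nout$. This gives nearly $5\nout$, which is too loose. The sharper argument uses the weights: every point at level $i$ carries weight $2^{i-m'}$, and the total weight equals (kept inputs so far)$/(2^{m'}\nout)\le 1$.

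The count should then be a binary-counter style computation. Level $i$ is full-minus-one only when the lower levels encode a remainder, and $\cset_{m'}$ has accumulated $k$ blocks of size $2\nout\cdot 2^{-1}$ only after $k$ full flushes. Combining the weight identity with the per-level thresholds should give total size $\le 3\nout$, hence $\le 6\nout$ overall. I expect this counting to be the main obstacle. If the direct argument fails, I would split on whether $\cset_{m'}$ is empty. The space bound then follows by storing at most $6\nout$ vectors in $\reals^d$, plus the workspace of the largest \halve call, which is on $4\nout$ points (assuming $s_{\halvetag}$ is monotone).

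For the runtime I charge each epoch separately. The \halve phase costs $r_{\halvetag}(4\nout)+r_{\halvetag}(2\nout)$. Each of the three batches runs one full $\compresstwo(m',2^{m'}\nout,\halve)$. Counting how many \halve calls occur at each level and their input sizes $2^{1-j}\nout$ gives the stated $r_{\compresstag}(m')$, and $r_{\compresstag}(m')\le r_{\compresstag}(\mbar)$ by monotonicity in $m'$. This contributes $3r_{\compresstag}(\mbar)$. The remaining unit-cost bookkeeping, namely \subsample decisions, insertions into $\cset_0$ and appends to $E$, is charged per processed point in a batch, at most $2^{\mbar}\nout$ per batch, for $2^{\mbar}3\nout$ per epoch. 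Multiplying by the number of epochs, $\lceil\log_4(n/\nout)\rceil\,\indic{n\ge 4\nout}$, yields \cref{eq:express-runtime}. The exact phase only appends points and is absorbed into this bookkeeping term.
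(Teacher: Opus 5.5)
\textbf{The cache-size bound has a genuine gap.} Your reduction is sound: $|E|\le 3\nout$ whenever $C$ is partially filled, so it suffices to show that the \compresstwo levels hold fewer than $3\nout$ points. But you leave exactly that step open, and the mechanism you propose cannot close it. With the algorithm's weights $2^{i-m'}$, the weight identity reads $\sum_i|\cset_i|2^{i-m'}=t/2^{m'}\le\nout$ for $t$ kept points (not $\le 1$). Combined with the thresholds $|\cset_i|<\nout 2^{2+i-m'}$, this is only a fractional knapsack constraint. Its optimum fills the low levels and places about $\frac{4}{3}\nout$ points at level $m'-1$, for a total tending to $\frac{10}{3}\nout$. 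It already exceeds $3\nout$ at $m'=4$. With $\nout=96$ the thresholds are $24,48,96,192$, and the sizes $(23,47,95,129)$ satisfy both constraints (total weight $95.5625\le 96$) yet sum to $294>288$. Adding $|E|=3\nout$, these constraints only certify roughly $\frac{19}{3}\nout$, not $6\nout$. Your fallback of splitting on whether $\cset_{m'}$ is empty does not help, since the overshoot comes from the lower levels.

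\textbf{What is missing is the granularity of the levels,} which is the paper's base-$4$ counter argument. For $1\le i<m'$, $\cset_i$ only ever grows by whole blocks of $\nout 2^{i-m'}$ points (the halved output of a full $\cset_{i-1}$). It is flushed as soon as it holds four such blocks. So between updates $|\cset_i|=d_i\nout 2^{i-m'}$ with $d_i\in\{0,1,2,3\}$, and it is never one short of its threshold. Your counter picture, with levels sitting at ``threshold minus one,'' is the wrong one. Meanwhile $\cset_{m'}$ is filled only by the batch's final cascade, after which every lower level is empty. When $m\le\mbar$ it is also merged into $E$, and $C$ is reset, within that same update. Hence, for $m'\ge 1$, $|C|\le\max\{\nout,\,(3-2^{1-m'})\nout-1\}<3\nout$, and $|C|\le\nout$ when $m'=0$; the weight identity is not needed.

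\textbf{Runtime.} Your per-epoch charging reproduces the paper's recursion. However, the $3\cdot 2^{\mbar}\nout$ term counts only the uniform draws of \subsample (one per stratum, and only when $m>\mbar$), under a cost model that counts \halve calls and these draws. Per-point bookkeeping cannot be charged to it: $S.\keep$ runs $2^m\nout$ times per batch, and the bound is $0$ for $n<4\nout$. So the exact-phase appends are not absorbed anywhere.
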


Notably, the maximum cache size  of \express is $6\nout$, a quantity
\emph{independent} of the sequence length $n$. 
Moreover, since $\mbar\leq \log_2(2\nout)$, the \express runtime \cref{eq:express-runtime} has at most a logarithmic explicit dependence on $n$.

\begin{corollary}[\express runtime with polynomial-time \halve]
\label{rmk:express-polynomial-runtime}
Under the conditions of \cref{express-guarantees}, 
if $r_{\halvetag}(\nin)\leq\rho\nin^\tau$ for some $\tau\geq 0$ and all $\nin\in[4\nout]$ 
then 
\begin{talign}\label{eq:express-polynomial-runtime}
r_{\expresstag}(n)
    &=
\begin{cases}
O\parenth{\rho\,\nout^2\log(n/\nout)\log(\nout)} 
    & \text{if } \tau=2, \qtext{and}\\
O\parenth{\rho\, 4^\tau\frac{|(2\nout)^\tau-(2\nout)^{2}|}{|2^\tau-2^{2}|}\log(n/\nout)}
    & \text{if } \tau\neq 2.
\end{cases}
\end{talign}
\end{corollary}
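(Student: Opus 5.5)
We plug the polynomial bound $r_{\halvetag}(\nin)\le\rho\nin^\tau$ into each term of the runtime bound \cref{eq:express-runtime} of \cref{express-guarantees} and bound the bracketed factor. The outer factor $\ceil{\log_4(n/\nout)}\indic{n\geq 4\nout}$ is $O(\log(n/\nout))$, since $n\ge 4\nout$ makes $\log_4(n/\nout)\ge 1$ and hence the ceiling at most twice its argument. It therefore suffices to show that
\begin{talign}
r_{\halvetag}(4\nout)+r_{\halvetag}(2\nout)+3r_{\compresstag}(\mbar)+2^{\mbar}3\nout
\end{talign}
is $O(\rho\,\nout^2\log\nout)$ when $\tau=2$, and $O\bigl(\rho\,4^\tau|(2\nout)^\tau-(2\nout)^2|/|2^\tau-4|\bigr)$ when $\tau\neq2$.

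The core step is the \compresstwo term. Substituting the hypothesis gives
\begin{talign}
r_{\compresstag}(\mbar)\le\rho\sum_{j=0}^{\mbar-1}4^j(2^{1-j}\nout)^\tau=\rho(2\nout)^\tau\sum_{j=0}^{\mbar-1}(4/2^\tau)^j .
\end{talign}
If $\tau=2$, every summand equals $1$, so this is $\rho(2\nout)^2\mbar$. Since $2^{\mbar}=\Theta(\nout)$, we have $\mbar=O(\log\nout)$, which gives $O(\rho\nout^2\log\nout)$.

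If $\tau\neq2$, the geometric sum equals
\begin{talign}
\frac{(4/2^\tau)^{\mbar}-1}{4/2^\tau-1}=\frac{2^\tau\bigl(4^{\mbar}2^{-\tau\mbar}-1\bigr)}{4-2^\tau}.
\end{talign}
Multiplying by $(2\nout)^\tau$ gives $2^\tau\bigl((2\nout)^\tau(2^{\mbar})^{2-\tau}-(2\nout)^\tau\bigr)/(4-2^\tau)$. Now use $2^{\mbar}\le 2\nout$ when $\tau<2$, and $2^{\mbar}\ge c\,\nout$ when $\tau>2$. In either case the numerator has the same sign as the denominator and is at most a constant-power factor times $|(2\nout)^2-(2\nout)^\tau|$. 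The $\tau>2$ case is the delicate one: there the dominant term is $(2\nout)^\tau$ rather than $(2\nout)^2$, and it comes out directly as $|(2\nout)^\tau-(2\nout)^2|$ up to factors absorbed by $4^\tau$. Together with the prefactor $2^\tau$, this yields the displayed $4^\tau|(2\nout)^\tau-(2\nout)^2|/|2^\tau-4|$ form, with $2^\tau\cdot 2^\tau=4^\tau$ accounting for the stated constant.

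Next we bound the remaining terms. We have $r_{\halvetag}(4\nout)+r_{\halvetag}(2\nout)\le\rho(4^\tau+2^\tau)\nout^\tau$. For $\tau=2$ this is $O(\rho\nout^2)$. For $\tau\ne2$ it is dominated by the $\compresstwo$ bound, since $4^\tau\nout^\tau=2^\tau(2\nout)^\tau$ and the ratio $|(2\nout)^\tau-(2\nout)^2|/|2^\tau-4|$ is at least of order $(2\nout)^\tau$ (its $j=0$ term, under the assumption $\nout$ is large). The linear term satisfies $2^{\mbar}3\nout=O(\nout^2)$. This is $O(\rho\nout^2)$ provided $\rho$ is bounded below; this holds because halving must at least read its input, so $\rho\,\nin^\tau\gtrsim\nin$ for some input size. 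It is also absorbed by the main term for $\tau\neq2$, since that ratio is always $\gtrsim\max\{(2\nout)^\tau,(2\nout)^2\}/\text{const}$.

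The main obstacle is this bookkeeping of constants in the $\tau\ne2$ case. We need to verify sign consistency and absorb the lower-order terms uniformly in $\tau$. If the precise form resists, we would state the bound with $\max\{(2\nout)^\tau,(2\nout)^2\}$ and then convert.
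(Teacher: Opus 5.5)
Your overall route is the paper's: substitute $r_{\halvetag}(\nin)\le\rho\nin^\tau$ into \cref{eq:express-runtime}, bound $\ceil{\log_4(n/\nout)}$ by $O(\log(n/\nout))$, and evaluate the geometric sum inside $r_{\compresstag}(\mbar)$. Your $\tau=2$ case is correct. So is your absorption of the \textsc{Halve}-phase terms via the $j=0$ term of the geometric sum. Your remark that the $3\cdot 2^{\mbar}\nout=O(\nout^2)$ term needs $\rho$ bounded below is fair; the paper leaves it implicit.

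The one step that fails as written is the case you flagged as delicate, $\tau>2$. There the \textsc{Compress2} bound is $\rho\,2^\tau(2\nout)^\tau\bigl(1-(2^{\mbar})^{2-\tau}\bigr)/(2^\tau-4)$, which is \emph{increasing} in $\mbar$. An upper bound therefore needs an upper bound on $2^{\mbar}$. Invoking $2^{\mbar}\ge c\,\nout$ only produces a lower bound.

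The fix, which is exactly what the paper does, is to use $2^{\mbar}\le 2\nout$ in both regimes. The map $x\mapsto|1-x^{2-\tau}|$ is nondecreasing on $x\ge 1$ whichever sign $2-\tau$ has. Hence
\begin{talign}
r_{\compresstag}(\mbar)\le\rho\,2^\tau\frac{|(2\nout)^\tau-(2\nout)^2|}{|2^\tau-4|},
\end{talign}
with no loss.

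The fallback of dropping the subtracted term and then converting $(2\nout)^\tau$ into $|(2\nout)^\tau-(2\nout)^2|$ does not work cleanly. That conversion costs a factor $1/(1-(2\nout)^{2-\tau})$, which is as large as $2^\tau/(2^\tau-4)$. This blows up as $\tau\downarrow 2$, so it is not ``absorbed by $4^\tau$'' uniformly in $\tau$.

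Your accounting of the stated constant is also off. The $4^\tau$ is not $2^\tau\cdot 2^\tau$ coming from $r_{\compresstag}$; that term contributes only $2^\tau$. The extra $2^\tau$ is what absorbs $r_{\halvetag}(4\nout)+r_{\halvetag}(2\nout)\le(2^\tau+1)\rho(2\nout)^\tau$, via the $j=0$ observation you already made.
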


\cref{rmk:express-polynomial-runtime}, proved in \cref{proof:express-polynomial-runtime}, also reveals the runtime benefits of \express when $\nout$ grows with $n$. 
For example under heavy square-root compression with $\nout = \sqrt{n}$, \express converts a non-causal quadratic-time halving algorithm into a causal, near-linear $O(n\log^2(n))$ time thinning algorithm. Similarly, \express offers a quadratic speed-up (from $\Theta(n^\tau)$ to $O(n^{\tau/2}\log n)$) for super-quadratic halving algorithms with $\tau > 2$ like Gram-Schmidt Thinning \citep{carrell2025low}. 

Our next theorem, proved in \cref{proof:express-quality}, bounds the sub-Gaussian constants of \express in terms of its halving algorithm.
\begin{theorem}[Quality of \express]\label{express-quality}
For each $j\in[n]$, let $\xout(j)$ denote the weighted coreset obtained by updating \express (\cref{algo:express}) with $(\x_i)_{i=1}^j$, 
and suppose each $\halve_{m,i}$ call invoked within the first $j$ updates is $(\kernel,\subgh\!(\nin),\delta_{m,i})$-sub-Gaussian with $\nin\subgh\!(\nin)$ non-decreasing in $\nin$ and $j$.
Then, for all $j\in[n]$,  $\xout(j)$ is $(\kernel,\subg_{\expresstag}(j))$-sub-Gaussian with 
$m_j \defeq 2\ceil{\log_4(j/(4\nout))}$ 
and 
\begin{talign}
\label{eq:express-subg}
\subg_{\expresstag}^2(j)
    &\leq
\frac{16\parenth{16+3(m_j\wedge\mbar)}}{15}
\,(\subgh^2\!(4\nout) + \subgh^2\!(2\nout))\,\indic{j\geq 4\nout}
    +
\frac{16\kin[j]\indicator\brackets{m_j > \mbar}}{5\nout2^{\mbar}}
\end{talign}
on a common event $\event$ of probability 
$1-\sum_{\textup{even } m=0}^{m_n}({\delta_{m,m}+\frac{3}{8}\sum_{i=0}^{\parenth{m\wedge\mbar}-1}4^{\parenth{m\wedge\mbar}-i}\delta_{m,i}})$. 
\end{theorem}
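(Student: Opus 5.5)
Fix $j\in[n]$ and write the difference between the input average and the coreset average as a telescoping sum of independent-ish errors from each thinning stage. The average of $(\x_i)_{i=1}^j$ equals a weighted combination of the points currently in the \express coreset $E$ (weight $1$ per block, normalized) and the points in the partially filled \compresstwo object $C$ (weights $2^{i-m}$). I would therefore write
\[
\tfrac1j\textstyle\sum_{i\le j} f(\x_i) - \sum_{(\x,w)\in\xout(j)} w f(\x)
\]
as a sum over three sources of error. The first is the two \halve calls of each \halve phase that has already occurred, at even thinning factors $m'\le m_j$. The second is the \halve calls inside each \compresstwo run, including the still-running one. The third is the \subsample steps when $m'>\mbar$. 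Each term is the error of one thinning step, rescaled by the fraction of the $j$ inputs it represents.

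The key tool is the standard sub-Gaussian composition argument. I would condition sequentially on the filtration generated by earlier calls and use the tower property together with the definition \cref{eq:subg_def}. This shows that a sum of rescaled errors $\sum_t c_t e_t$, each $e_t$ conditionally $(\kernel,\subg_t)$-sub-Gaussian on its event, is $(\kernel,(\sum_t c_t^2\subg_t^2)^{1/2})$-sub-Gaussian on the intersection of events. Inputs to later calls depend on earlier outputs, but the bound $\exp(\subg_t^2\knorm{f}^2/2)$ holds uniformly in $f$, so conditioning is harmless. For \subsample I would use \citep[Prop.~B.1]{carrell2025low}: keeping one uniform point per group of size $2^{m-\mbar}$ gives an average over $2^{\mbar}\nout$ points and a sub-Gaussian constant of order $\sqrt{\kin[j]/(2^{\mbar}\nout)}$. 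The union bound over the failure events produces the stated probability. It has $\delta_{m,m}$ per even $m$ (each \halve phase contributes two calls with $\delta/2$ each). It also has $\sum_i 4^{(m\wedge\mbar)-i}\delta_{m,i}\cdot\tfrac38$ from counting the $4^{(m\wedge\mbar)-i}$ level-$i$ halvings per \compresstwo run, times the three runs per phase, times $\delta/2$, times a $1/4$ bookkeeping factor.

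The step I expect to be hardest is the arithmetic of the weights, which yields the constants $16(16+3(m\wedge\mbar))/15$ and $16/5$. A block thinned at phase $m'$ represents a $\approx 2^{m'}\nout/j$ fraction of the data. Its \halve error then gets halved again by subsequent \halve phases, so the contributions form a geometric series in $4^{-(m_j-m')}$. This series sums to at most $16/15$ times the leading term, using $j\ge 4^{m_j/2}\nout$ roughly. For the \halve phase I would bound the two halvings at input sizes $4\nout$ and $2\nout$ via the monotonicity of $\nin\subgh(\nin)$. This lets me replace every $\subgh(\cdot)$ at a smaller size by the rescaled version at $4\nout$ or $2\nout$. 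For \compresstwo with $m\wedge\mbar$ levels, each level contributes $O(1)$ after rescaling, since $4^{i}$ calls have weight $4^{-i}$ in squared terms. This gives the linear factor $3(m\wedge\mbar)$ from the three runs per phase. The $16$ comes from the \halve phases. The subsampling terms likewise sum geometrically to $16/5$ and appear only when $m_j>\mbar$. I would verify these sums carefully, treating the partially filled current phase as having weight at most that of a full one.
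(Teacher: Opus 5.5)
Your proposal is correct and follows essentially the same route as the paper: decompose the unnormalized \express discrepancy into weighted \halve-call and per-stratum subsampling discrepancies, combine them by sub-Gaussian additivity under sequential conditioning on a union-bound event, and sum the resulting geometric series using the monotonicity of $\nin\subgh\!(\nin)$. Two small corrections: each \compresstwo run makes $4^{(m\wedge\mbar)-i-1}$ level-$i$ \halve calls, not $4^{(m\wedge\mbar)-i}$, so your ``$1/4$ bookkeeping factor'' is really this corrected count. For subsampling, the paper does not invoke the without-replacement result; it applies Hoeffding's lemma to each stratum separately, which also covers the partially processed stratum whose selected index may not yet have arrived.
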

\newcommand{\expressinflation}{2\sqrt{\log_2(\nout)+6}}
An important implication is that non-causal to causal conversion with \express inflates sub-Gaussian error by at most a factor of $\sqrt{\frac{16}{15}\parenth{16+3\mbar}}\leq \expressinflation$ relative to a single \halve phase.

In our experiments, we instantiate \express with the kernel halving algorithm \citep[\khd,][]{dwivedi2024kernel,dwivedi2021generalized,carrell2025low}, which has quadratic runtime and $s_{\halvetag}(\nin) = \nin d$ space complexity.
The following corollary, proved in \cref{proof:khexpress-guarantees}, bounds the runtime, space, and quality of the resulting algorithm.

\begin{corollary}[\khexpressd guarantees]\label{khexpress-guarantees}
For $\delta\in(0,1)$, define \khexpressd as \express with each $\halve_{m,m}\defeq\kh(\frac{\delta_{m}}{2})$ and $\halve_{m,i}\defeq\kh(\delta_{m,i})$ for $\delta_m\defeq \frac{\delta}{2}(\frac{1}{\log_2(m/2+2)}-\frac{1}{\log_2(m/2+3)})$ and $\delta_{m,i}\defeq \frac{4^{i+1-(m\wedge\mbar)}\delta_m}{3(m\wedge\mbar)}$.
Updating \khexpressd $n$ times requires at most 
\begin{talign}\label{eq:khexpress-runtime}
    r_{\expresstag}(n)
    =
O(d\nout^2\log({n/}{\nout})\log(\nout))
\end{talign}
runtime and $10\nout d$ space.
Moreover, for all $n\in\naturals$, the weighted coreset $\xout(n)$ obtained by updating \khexpressd with $(\x_i)_{i=1}^n$ is $(\kernel,\subg_{\expresstag}(n))$-sub-Gaussian with 
$m_n \defeq 2\lceil{\log_4(\frac{n}{4\nout})}\rceil$ and
\begin{talign}\label{eq:khexpress-error}
\subg_{\expresstag}^2(n)
    &\leq
\frac{4\parenth{16+3(m_n\wedge\mbar)}\indic{n\geq 4\nout}}{3}
\frac{\kin\log(5\nout\mbar(m_n+5)\log_2^2(m_n/2+3)4^{\mbar}/\delta)}{\nout^2} 
    +
\frac{16\kin[n]\indicator\brackets{m_n > \mbar}}{5\nout2^{\mbar}} \\
    &=
O\parenth{\frac{\kin\log(\nout)\log(\nout\log(\frac{n}{\nout})/\delta)}{\nout^2}
    +
\frac{\kin}{2^{\mbar}\nout}}
\end{talign}
on a common event $\event$ of probability $1-{\delta/}{2}$.
\end{corollary}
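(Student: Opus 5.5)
The proof splits into three parts: runtime, space, and quality. Each part plugs the known properties of \khd into \cref{express-guarantees}, \cref{rmk:express-polynomial-runtime}, and \cref{express-quality}.

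\textbf{Runtime and space.} I will use that \khd has runtime $r_{\halvetag}(\nin)=O(d\nin^2)$, which is the quadratic case $\tau=2$ with $\rho=O(d)$. The first case of \cref{rmk:express-polynomial-runtime} then gives \cref{eq:khexpress-runtime} directly. One term needs separate attention: the additive $2^{\mbar}3\nout$ term in \cref{eq:express-runtime}. Since $2^{\mbar}=\Theta(\nout)$, it is $O(\nout^2)$ and is absorbed. For space, I will combine the bound $6\nout d+s_{\halvetag}(4\nout)$ from \cref{express-guarantees} with $s_{\halvetag}(\nin)=\nin d$. This gives $6\nout d+4\nout d=10\nout d$.

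\textbf{Quality.} I will invoke \cref{express-quality}. The input is the sub-Gaussian constant of \khd with failure probability $\delta'$:
\[
\subg_{\halvetag}^2(\nin)=\frac{4\kin\log(2\nin/\delta')}{\nin^2}.
\]
The quantity $\nin\subg_{\halvetag}(\nin)$ is then non-decreasing in $\nin$. It is also non-decreasing in $j$, because $\kin[j]$ grows with $j$ and the $\delta$'s used shrink with $m$. So the hypothesis of \cref{express-quality} holds. Substituting $\nin=4\nout$ and $\nin=2\nout$ gives
\[
\subg_{\halvetag}^2(4\nout)+\subg_{\halvetag}^2(2\nout)\le \tfrac{5}{4}\cdot\tfrac{4\kin\log(8\nout/\delta_{\min})}{4\nout^2}.
\]
Multiplying by the prefactor $\frac{16(16+3(m\wedge\mbar))}{15}$ yields the stated factor $\frac{4(16+3(m_n\wedge\mbar))}{3}$. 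Here $\delta_{\min}$ is the smallest failure parameter used, namely $\delta_{m_n,0}$ or $\delta_{m_n}/2$. Unwinding its definition gives
\[
1/\delta_{m_n,0}\lesssim 3\mbar\,4^{\mbar}\log_2^2(m_n/2+3)(m_n+5)/\delta.
\]
To obtain this I will use the telescoping-difference bound $\frac{1}{\log_2 a}-\frac{1}{\log_2(a+1)}\ge \frac{c}{a\log_2^2(a+1)}$ with $a=m/2+2$. I expect the constant bookkeeping inside the logarithm (the $5\nout\mbar(m_n+5)\cdots$ expression) to be the main, though routine, obstacle. The bound $4^{\mbar}=O(\nout^2)$ then turns $\log(4^{\mbar}\cdots)$ into $O(\log(\nout\log(n/\nout)/\delta))$, which gives the big-$O$ form.

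\textbf{Probability.} From \cref{express-quality}, the failure probability is
\[
\sum_{\even\ m}\Bigl(\delta_{m,m}+\tfrac38\textstyle\sum_i 4^{(m\wedge\mbar)-i}\delta_{m,i}\Bigr).
\]
With $\delta_{m,m}=\delta_m/2$ and the given $\delta_{m,i}$, each inner term is $\frac38\cdot\frac{4}{3(m\wedge\mbar)}\delta_m=\frac{\delta_m}{2(m\wedge\mbar)}$. Summing over the $m\wedge\mbar$ values of $i$ gives $\delta_m/2$, so level $m$ contributes $\delta_m$ in total. Summing $\delta_m$ over even $m\ge0$, i.e.\ over $m/2=0,1,2,\dots$, telescopes to $\frac{\delta}{2}\cdot\frac{1}{\log_2 2}=\delta/2$. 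This yields the common event of probability $1-\delta/2$.
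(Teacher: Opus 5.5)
Your proposal is correct and follows the paper's proof essentially step for step. The runtime comes from the $\tau=2$ case of \cref{rmk:express-polynomial-runtime}, whose proof already absorbs the $3\cdot 2^{\mbar}\nout$ term via $2^{\mbar}\le 2\nout$, so no separate treatment is needed. The space bound is $6\nout d+4\nout d$. The quality bound applies \cref{express-quality} with the kernel-halving constant evaluated at the smallest failure parameter, which the paper writes as $\delta_n^\star\ge\frac{4}{3\mbar 4^{\mbar}}\delta_{m_n}$. That parameter is controlled via $\ln(1+x)\ge\frac{2x}{2+x}$, which gives exactly your $(m_n+5)$ factor and the constant $6\ln 2\le 5$. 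The probability then follows from the same telescoping union bound.

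One point is worth stating explicitly. The monotonicity of $\nin\,\nu_{\textsc{H}}(\nin)$ in $\nin$ holds only because you use this single worst-case parameter for every call. With the per-call $\delta_{m,i}$, which increase with the level $i$ and hence with $\nin$, that monotonicity would fail.
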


\subsection{Thinformer Express}

\begin{figure}[htb]
\begin{minipage}[t]{0.48\textwidth}
\begin{algorithm2e}[H]
\DontPrintSemicolon
\caption{\small Thinformer Express\,($\errsymb, \mbar, \delta$)}
\label{algo:thinformer}
\small
\BlankLine
// Initialize \khexpress with attention kernel \cref{eq:katt}\\
$C \gets \khexpress(\delta,\katt)(2^{\errsymb}, \mbar)$\;
\BlankLine
\Method{\textup{attend}$(\query_n,\key_n,\val_n)$:}{
    $\attouthat_n \gets \wattn((\query_n,\key_n,\val_n), C)$\;
    $C.\update((\key_n,\val_n))$ \\
    \KwRet{$\attouthat_n$}\;
}
\end{algorithm2e}
\end{minipage}%
\hfill
\begin{minipage}[t]{0.48\textwidth}
\begin{algorithm2e}[H]
\DontPrintSemicolon
\caption{\textsc{WtdAttention}\label{algo:wattn}}
\small
  \KwIn{%
  Token $(\query,{\key},{\val})$, weighted cache object $C$}
  \BlankLine
  $[(\cset_i,w_i)]_{i=0}^L\gets C.\wtdcache()$ \quad// Unpack cache \\   \BlankLine
  $\cset_0 \gets \cset_0 \cup \{({\key},{\val})\}$ \quad// Incorporate input token \\
  \BlankLine  \KwRet{$\frac{\sum_{i=0}^{L}\sum_{(\wtil{\key},\wtil{\val}) \in \cset_i} w_i \exp({\inner{\query}{\wtil{\key}}}{/\sqrt{d}})\, \wtil{\val}}{\sum_{i=0}^{L}\sum_{(\wtil{\key},\wtil{\val}) \in \cset_i} w_i \exp({\inner{\query}{\wtil{\key}}}{/\sqrt{d}})}$}
\end{algorithm2e}
\end{minipage}
\end{figure}
Next, we introduce \ours  (\cref{algo:thinformer}), our solution to the causal attention approximation problem. \ours first initializes a \khexpressd weighted cache object $C$ with the key-value attention kernel $\katt$ \cref{eq:katt} and a target cache size $\nout=2^{\errsymb}$.
For each new token $(\query_n,\key_n,\val_n)$, \cref{algo:thinformer} uses \wattn to attend only over $(\key_n,\val_n)$ and the key-value pairs in $C$ and finally updates the weighted cache with the latest key-value pair. 

\cref{khexpress_query_cost} shows that the total \emph{query time} of \cref{algo:thinformer}, i.e., the cost of running \wattn atop the weighted cache $C$, is only $O(\nout nd)$, a significant improvement over $\Theta(n^2d)$-time exact attention whenever the target cache size $\nout \ll n$. 
Remarkably, by \cref{khexpress-guarantees}, the \emph{compression time} of \cref{algo:thinformer}, i.e., the cost of maintaining the weighted cache, is even smaller: $O(d\nout^2\log(\nout)\log(n/\nout))$ with only a logarithmic explicit dependence on the sequence length $n$. 
Our final result, proved in \cref{proof:att-err}, provides a strong attention approximation guarantee for \ours. 
\begin{theorem}[Quality of Thinformer Express]\label{att-err}
With probability at least $\half$, \ours (\cref{algo:thinformer}) with $\delta=\half$ and $\nout\defeq 2^{\errsymb}$ satisfies $\attouthat_n=\attout_n$ for all $n\leq4\nout$ and, for all $n>4\nout$,
\begin{talign}
\infnorm{\attouthat_n - \attout_n}
    &\leq
2\exp(\frac{2R_n^2}{\sqrt{d}})
\sqrt{\log({2(d+1)/}{\delta_n'})}\, \rownorm{\V_n}  \\ \label{eq:att-err}
&\quad\times\sqrt{\frac{4\parenth{16+3(m_n\wedge\mbar)}}{3}
\frac{\log(10\nout\mbar(m_n+5)\log_2^2(m_n/2+3)4^{\mbar})}{\nout^2} 
    +
\frac{16\indicator\brackets{m_n > \mbar}}{5\nout2^{\mbar}}}\\
    &=
O\left(\frac{\exp({2R_n^2}{/\sqrt{d}})\sqrt{\log((d+1)n)\log(\nout)\log(\nout\log(\frac{n}{\nout}))}}{\sqrt{2^{\mbar}\nout}}\rownorm{\V_n}\right), %
\end{talign}
where 
$m_n\defeq 2\ceil{\log_4(\frac{n}{4\nout})}$, 
$\delta_n' \defeq \frac 1 4 \parenth{\frac{1}{\log_2(n+1)} - \frac{1}{\log_2(n+2)}}$, $R_n \defeq \max_{i\in[n]}\max(\twonorm{\key_i},\twonorm{\query_i})$, and $\V_n \defeq [\val_i]_{i=1}^n$. 
\end{theorem}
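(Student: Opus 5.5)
The plan is to combine the sub-Gaussian guarantee of \cref{khexpress-guarantees} (with $\kernel=\katt$ and $\delta=\half$) with a deterministic reduction from sub-Gaussian thinning error to attention error, in the style of \citet[Thm.~2]{carrell2025low}.

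\textbf{Exact phase.} For $n\le 4\nout$, no \halve call has occurred before the $n$-th attend call, so no compression has yet affected the cache. The cache used by \wattn then holds exactly the prior key-value pairs with equal weights, together with the current token. More precisely, the cache consists of $E$ (weight $1$) plus the \compresstwo level-$0$ set, with all weights equal and no point dropped, since for $m=0$ \subsample keeps everything and no halving has fired. Hence $\attouthat_n=\attout_n$ deterministically. I will need to check the indexing carefully here, because \wattn adds $(\key_n,\val_n)$ while \update happens afterward.

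\textbf{Reduction to the sub-Gaussian bound.} For $n>4\nout$, write the numerator and denominator of $\attout_n$ and $\attouthat_n$ as averages of $f_{\query,c}(\key,\val)=\exp(\inner{\query}{\key}/\sqrt d)(\val_c)$ for coordinates $c\in[d]$, together with the denominator function $f_{\query,0}(\key,\val)=\exp(\inner{\query}{\key}/\sqrt d)$. Each such function lies in the RKHS of $\katt$, with $\|f_{\query,c}\|_{\katt}^2$ bounded by roughly $\exp(\twonorm{\query}^2/\sqrt d)$ divided by $v_{\max}^2$-type factors. Applying the sub-Gaussian moment bound \cref{eq:subg_def} with a Chernoff argument gives a tail bound for each of these $d+1$ functions. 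A union bound over them then yields, with probability $1-\delta_n'$,
\[
\max_{c}\Big|\tfrac1n\textstyle\sum_i f_{\query_n,c}-\sum w f_{\query_n,c}\Big|\le \subg_{\expresstag}(n)\,\|f\|_{\katt}\sqrt{2\log(2(d+1)/\delta_n')}.
\]
The current token is included with weight matching its exact counterpart, so it cancels. The inequality $|a/b-\hat a/\hat b|\le (|a-\hat a|+\|\hat a/\hat b\|_\infty|b-\hat b|)/b$ then converts these bounds to attention error. Lower bounding $b\ge \exp(-R_n^2/\sqrt d)$ and $\|f\|_{\katt}\lesssim \exp(R_n^2/(2\sqrt d))\rownorm{\V_n}$ produces the prefactor $2\exp(2R_n^2/\sqrt d)\rownorm{\V_n}$. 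Finally I substitute $\kattin[n]\le \exp(R_n^2/\sqrt d)\cdot 2\rownorm{\V_n}^2$ into \cref{eq:khexpress-error}, moving these factors into the prefactor. With $\delta=\half$, the argument $\log(\cdot/\delta)$ becomes the $\log(10\cdots)$ appearing in \cref{eq:att-err}.

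\textbf{Simultaneity over $n$.} This is the main obstacle. The event from \cref{khexpress-guarantees} has probability $1-\delta/2=3/4$ and holds for all $n$, but the sub-Gaussian property only controls an mgf on that event, not a deterministic bound. So each $n$ requires its own Markov/Chernoff step with failure probability $\delta_n'$. Since the sum telescopes, $\sum_n\delta_n'=\frac14\cdot\frac{1}{\log_2 2}=\frac14$, and combining with the $1/4$ failure probability of $\event$ gives total probability at least $\half$.

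The delicate point is that the query $\query_n$, and hence the function $f$, may depend on earlier randomness in decoding. I will handle this by conditioning on the past, or, if that is unavailable, by noting that the sub-Gaussian bound \cref{eq:subg_def} holds for all $f$ with a fixed event. I will also verify that \wattn's weighted cache coincides with $\xout(n)$ as analyzed in \cref{express-quality}. The final $O(\cdot)$ form follows from $\log(1/\delta_n')=O(\log n)$ and $m_n\wedge\mbar\le\mbar=O(\log\nout)$.
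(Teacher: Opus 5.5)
Your skeleton is the paper's. It has the exact-phase argument (no \textsc{Halve} call fires before update $4\nout$, and at $m=0$ every weight, including the current token's, is $1$). It applies \cref{khexpress-guarantees} with $\kernel=\katt$ and $\delta=\half$ on a common event of probability $\frac34$, and takes a separate tail step with failure probability $\delta_n'$ for each $n$. It then uses the bound $\kattin\le 2e^{R_n^2/\sqrt{d}}\rownorm{\V_n}^2$ and the union bound $\frac14+\sum_n\delta_n'=\half$.

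The difference is that the paper does not rebuild the reduction from sub-Gaussian thinning to attention error. It imports it from \citet[Thm.~1, Lems.~F.1--F.3]{carrell2025low} as $\infnorm{\attouthat_n-\attout_n}\le\subgw(n)e^{3R_n^2/(2\sqrt{d})}\sqrt{2\log(2(d+1)/\delta_n')}$, and your reconstruction does not reach this constant. With your test functions, $\|f_{\query,c}\|_{\katt}\le e^{\twonorm{\query}^2/(2\sqrt{d})}$ for $c\in[d]$, while the denominator function has norm $e^{\twonorm{\query}^2/(2\sqrt{d})}/v_{\max}$. Plugging these into your two-term ratio inequality, with $\infnorm{\hat a/\hat b}\le v_{\max}$ and $1/b\le e^{R_n^2/\sqrt{d}}$, gives $2\subg\, e^{3R_n^2/(2\sqrt{d})}\sqrt{2\log(2(d+1)/\delta_n')}$. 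That is twice the imported bound, so after $\sqrt{\kattin}\le\sqrt{2}e^{R_n^2/(2\sqrt{d})}\rownorm{\V_n}$ you end with leading constant $4$ rather than the stated $2$. To prove the theorem as written, cite that result instead of the naive ratio split.

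Relatedly, $\rownorm{\V_n}$ does not enter through $\|f\|_{\katt}$, as you wrote. It enters only through $\kattin$ inside $\subge$. The exponent $2R_n^2/\sqrt{d}$ is the sum of $R_n^2/\sqrt{d}$ (denominator), $R_n^2/(2\sqrt{d})$ (test-function norm), and $R_n^2/(2\sqrt{d})$ (from $\kattin$).

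Two bookkeeping points also need correcting.

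First, the $n$-th \textsc{WtdAttention} call sees the cache after only $n-1$ updates plus the current token. The relevant constant is therefore $\subgw(n)=\frac{n-1}{n}\subge(n-1)$, not $\subge(n)$. The check you plan, that this cache ``coincides with $\xout(n)$'', would fail: at $n=4\nout$, for instance, $\xout(4\nout)$ has already been halved. The paper works with $\subgw(n)$ and then uses $m_{n-1}\le m_n$ and $\kattin[n-1]\le\kattin$ to express everything in terms of $n$.

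Second, neither fallback you propose for adaptively generated queries is sound. \cref{eq:subg_def} controls the moment generating function for each \emph{fixed} $f$ on a common event. That says nothing about an $f_{\query_n,c}$ that depends on the thinning randomness. Conditioning on the past freezes $\xout(n-1)$, leaving no randomness for the Chernoff step. The paper implicitly treats $(\query_i,\key_i,\val_i)_{i\ge1}$ as a given sequence and applies the fixed-query tail bound separately for each $n$. You should do the same, or state that restriction explicitly.
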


It is instructive to compare this result with the best guarantees for practical causal attention approximation in the literature \citep{han2024hyperattention,han2025streaming}.  
HyperAttention~\citep{han2024hyperattention} uses a combination of independent key-value subsampling, locality sensitive hashing (to identify large output entries), and divide-and-conquer recursion to approximate causal attention in the prefill stage. 
Under the $\gamma$-boundedness assumption~\citep{carrell2025low}, $R_n^2 \leq \gamma \sqrt{d} \log n$,
HyperAttention with only causal masking \citep[Thm.~1, Lems.~1-2]{han2024hyperattention} provides the following guarantee for the $\ceil{n/2}$-th causal attention output with $O(dn^{1+a})$ runtime and probability at least $\half$:
\begin{talign}\label{hyp-gamma-bounded-guarantee}
\maxnorm{\ohyp[\ceil{n/2}]-\attout_{\ceil{n/2}}} &= O\Big(\frac{n^{\frac{17\gamma}{3}} (\log n)^{\frac{1}{6}}}{n^{a/6}}\cdot\specnorm{\V_n}\Big),
\end{talign}
as this estimate coincides with the unmasked approximation analyzed in \citep[Sec.~J.5]{schroder2026wildcat}.
Meanwhile, under the same conditions, \ours with $2^{\mbar} = \Theta(\nout)$ and $\nout=n^a$ guarantees 
\begin{talign}\label{exp-gamma-bounded-guarantee}
\max_{j\in[n]}\maxnorm{\oexp[j]-\attout_j} \leq O\big(\frac{n^{2\gamma}\log n\sqrt{\log(dn)}}{n^{a}}\cdot\rownorm{\V_n}),
\end{talign}
by \cref{att-err,khexpress_query_cost,khexpress-guarantees}. 
This new guarantee provides several improvements over the former: a substantially faster error decay rate ($n^{-a}$ vs.\ $n^{-a/6}$) for a given sub-quadratic runtime $dn^{1+a}$, a milder dependence on the error inflation factor $\gamma$, and an improved dependence on the value matrix $\V_n$ as $\opnorm{\V_n}$ can be as large as $\sqrt{n}\rownorm{\V_n}$.

To approximate causal attention in a streaming fashion, BalanceKV~\citep{han2025streaming} uses the quadratic-time self-balancing walk of \citet{alweiss2020discrepancyminimizationselfbalancingwalk} to select key-value coresets and the classical merge-reduce technique \citep{matousek1995approximations,chazelle1996linear,phillips2008algorithms} to reduce time and memory requirements. 
In the notation of \cref{att-err},
BalanceKV \citep[Thm.~3.1]{han2025streaming} with batch size $\nout$ guarantees \citep[Sec.~J.3]{schroder2026wildcat}
\begin{talign}
\max_{j\in[n]}\maxnorm{\obal[j]-\attout_j} 
\leq O\left(\frac{\exp({2R_n^2}{/\sqrt{d}})\log(n/\nout)\sqrt{d}\log(dn)}{\nout}\cdot\fronorm{\V_n}\right)
\end{talign}
with high probability using $\Theta(d\nout\log(n/\nout))$ space, 
$\Theta(dn\nout\log(n/\nout))$ query time,
and
$\Theta(dn\nout\log(n/\nout))$ compression time. 
\ours with $2^{\mbar}=\Theta(\nout)$ introduces several improvements over this state-of-the-art guarantee. 
First, for the same fixed cache size $\nout$, \ours offers a space complexity independent of $n$ and a query complexity improved by a factor of $\log(n/\nout)$.
Second, our \ours error analysis yields an improved dependence on the value matrix $\V_n$ as $\fronorm{\V_n}$ can be as large as $\sqrt{n}\rownorm{\V_n}$. 
Finally, the $\Theta(dn\nout\log(n/\nout))$ compression overhead of BalanceKV is substantially larger than the $\Theta(d\nout^2\log(\nout)\log(n/\nout))$ overhead of \ours.
For example, selecting a heavily compressed cache size of $\nout=\sqrt{n}$ introduces only $O(dn\log^2(n))$ compression overhead for \ours but order $dn^{3/2}\log(n)$ overhead for BalanceKV.

\section{Experiments}\label{sec:experiments}

\begin{figure}[t!]
\centering
\begin{subfigure}[b]{0.49\textwidth}
\caption{\textbf{Unmasked prefill}}
\label{fig:unmasked-prefill}
\centering
\includegraphics[trim={.2cm .2cm 0 .1cm}, clip=true, width=\textwidth]{figures/out-0430-single-attention-causal-bs1-h32-d128-s1-True-linear-speedup.pdf}         
\end{subfigure}
\hfill
\begin{subfigure}[b]{0.49\textwidth}
\caption{\textbf{Masked prefill}\label{fig:masked-prefill}}
\centering
\includegraphics[trim={.2cm .2cm 0 .1cm}, clip=true, width=\textwidth]{figures/out-0430-2-single-attention-causal-bs1-h32-d128-s1-False-linear-speedup.pdf}
\end{subfigure}
\begin{subfigure}[b]{\textwidth}
\caption{\textbf{Masked prefill ($n=32768$)}\label{fig:masked-prefill-32k}}
\includegraphics[trim={.2cm .2cm .2cm .1cm}, clip=true, width=\textwidth]{figures/out-0427-combined.pdf}
\end{subfigure}
    \caption{\tbf{Accelerating long-context prefill.} For masked attention, \ours attains larger speed-ups over FlashAttention 2 than the Triton HyperAttention algorithm (b) while improving the runtime-perplexity trade-off on LongBench-E tasks (c). (a) For unmasked attention, our I/O-aware Triton implementation also provides speed-ups over the original Torch-compiled Thinformer.}
    \label{fig:context-runtime}
\end{figure}

We next complement our methodological and theoretical development with an empirical evaluation across a suite of standard approximate attention benchmarks.  We consider four potential language modeling applications of \ours---accelerating long-context prefill, accelerating KV cache compression, memory-constrained long-form decoding, and accelerating long-form decoding---and in each case demonstrate its benefits over existing solutions. We release our open-source code at \url{https://github.com/microsoft/thinformer} and provide supplementary experiment details in \cref{sec:experiment_supplement}.

\para{An efficient I/O-aware Triton implementation}\label{sec:triton}
To translate the favorable runtime and space complexity of \cref{khexpress-guarantees,khexpress_query_cost} into real-world gains, we implement \ours using Triton \citep{tillet2019triton}.
Specifically, we write the core \kh (\cref{algo:khd}) and \wattn (\cref{algo:wattn}) procedures as Triton kernels and call them from PyTorch code that implements the rest of the \ours algorithm.
As in FlashAttention \citep{dao2022flashattention}, to maximize GPU utilization, we use tiling to reduce slow high bandwidth memory (HBM) reads and writes and avoid materializing the kernel matrices used in \cref{algo:khd}. 
We also parallelize across batch, head, and row blocks of input query tensors as in FlashAttention 2 \citep{dao2024flashattention}.
When processing long contexts, we leverage two additional optimizations.
Since all query-key-value tuples are observed at once and available on HBM, we parallelize all \halve calls on same-sized groups in \cref{algo:compress2} and avoid  unnecessary data copies by using double dereferencing to index into the key and value vectors.
\cref{fig:unmasked-prefill} demonstrates the substantial speed-ups of our Triton Thinformer implementation over the \verb|torch.compile| PyTorch implementation of \citet{carrell2025low} when approximating unmasked attention over long contexts with cache size $\nout=256$.
At context length 32k, Thinformer (Original) is already $15\times$ faster than the highly-optimized FlashAttention 2 kernel for exact attention, but Thinformer (Triton) is $27\times$ faster, highlighting the value of our I/O-aware optimizations. %

\para{Accelerating long-context prefill}\label{sec:prefill}
In language modeling, the input context must be fully processed before generation can begin---a stage known as prefill. Prefill cost grows quadratically with context length, making it a key bottleneck for long sequences.
Leveraging our I/O-aware Triton implementation, we evaluate the speed-up of \ours over FlashAttention 2 for masked attention across a range of sequence lengths up to 512k tokens following the experimental setup of \citet[Fig.~4(a)]{han2024hyperattention}.
\cref{fig:masked-prefill} shows the speed-up of \ours over FlashAttention 2 growing with sequence length $n$, reaching $82\times$ speed-up for $\errsymb=9$ and $31\times$ speed-up for $\errsymb=10$ at $n=512$K tokens.
Notably, the state-of-the-art context approximation method HyperAttention \citep{han2024hyperattention} runs out of memory at the longest sequence length of 512k on an A6000 GPU, while \ours does not.

To measure the trade-off between runtime and quality, \cref{fig:masked-prefill-32k} recreates the benchmark of \citet[Fig.~3]{han2024hyperattention} by assessing both runtime and perplexity when replacing exact attention in the last $k$ layers of the ChatGLM2-6B-32K language model \citep{glm2024chatglm}. Here, all methods preserve the first and last $32$ context tokens and compress the intermediate tokens. 
We find that both settings of \ours offer a better speed-up-vs.-perplexity trade-off than HyperAttention,  that \ours ($\errsymb=10$) uniformly improves both the perplexity and speed-up of HyperAttention, and that, even with all layers replaced, all methods evaluated preserve the perplexity of exact attention up to a factor of $1.06$. 

\para{Accelerating KV cache compression}\label{sec:kvcache}
\begin{figure}[b!]
\centering
        \includegraphics[width=\textwidth]{figures/scatter_accuracy_vs_time_3panel-meta-llama--Meta-Llama-3.1-8B-Instruct.pdf}
    \caption{\tbf{Accelerating KV cache compression.} Across leading KV cache compression methods and LongBench-E tasks, \ours substantially reduces runtime while preserving quality. Error bands display $\pm 1$ standard error across all  benchmark questions. %
    See \cref{sec:kvcache} for more details.
    }
    \label{fig:kv-cache-compression}
\end{figure}
In transformer-based autoregressive models, the keys and values from prior tokens are stored in a KV cache to avoid recomputation during generation. As the context length grows, this cache quickly becomes a memory bottleneck.
Leading KV cache compression methods address the memory bottleneck by preserving only a smaller set of representative key-value pairs \citep{li2024snapkv,xiao2024streamingllm,cai2025pyramidkv}, but exactly computing all input keys and values prior to compression still incurs a quadratic attention cost.

To accelerate KV cache compression, one could instead cheaply generate the input keys and values using \ours and then compress the resulting cache using any high-quality compressor. 
Here, we instantiate this general recipe using three leading cache compression methods---SnapKV \citep{li2024snapkv}, StreamingLLM \citep{xiao2024streamingllm}, and PyramidKV \citep{cai2025pyramidkv}---applied to three benchmark long-context language understanding tasks \citep{bai2024longbench}: TREC-E \citep{li2002learning} (question classification across 50 fine-grained classes), TriviaQA-E \citep{joshi2017triviaqa} (few-shot question answering), and HotpotQA-E \citep{yang2018hotpotqa} (multi-hop question answering). 
In \cref{fig:kv-cache-compression}, we vary KV cache size and compare the accuracy-runtime trade-off curves induced by using exact attention versus \ours ($\errsymb=9$) in all layers of the Llama 3.1 8B Instruct model  \citep{grattafiori2024llama}.
Remarkably, in each case, \ours preserves KV cache compression quality while substantially reducing total attention runtime.

\para{Memory-efficient long-form decoding}\label{sec:mem-long}
\begin{figure}[t!]
    \centering
    \includegraphics[width=\textwidth]{figures/out-0505-combined_accuracy-math500-deepseek-ai--DeepSeek-R1-Distill-Llama-8B.pdf}

    \caption{\textbf{Memory- and compute-efficient long-form decoding.} On competition-level MATH-500  problems that benefit from step-by-step reasoning,  \ours improves both the cache size-accuracy trade-off (left) and time-accuracy trade-off (right) of long-form decoding. Error bands display $\pm 1$ standard error across all 500 benchmark questions. See \cref{sec:mem-long,sec:decoding} for more details.
    }
    \label{fig:long-form}
\end{figure}
A second memory bottleneck in language modeling arises from long-form decoding as each newly generated token contributes an additional key and value vectors to the ever-growing KV cache. 
For example, on competition-level MATH-500 problems that benefit from step-by-step reasoning \citep{hendrycks2021measuring,lightman2023let}, the DeepSeek-R1-Distill-Llama-8B model \citep{guo2025deepseek} generates over 4k tokens on average even though the input questions are only $75$ tokens long on average. 
In such scenarios, \ours can be used to maintain a substantially smaller, dynamically-updated KV cache that preserves the quality of exact attention.

To benchmark this streaming cache compression quality, we compare \ours with five leading KV cache compression methods: 
StreamingLLM~\citep{xiao2024streamingllm}, SnapKV~\citep{li2024snapkv}, ExpectedAttention~\citep{devoto2025expectedattention}, KeyDiff~\citep{parkkeydiff}, and Knorm~\citep{devoto2024simple}. StreamingLLM is a natively streaming cache compression method, while the other four use the batch-to-online conversion scheme of \citet{devoto2025expectedattention}.\footnote{Knorm also admits a streaming version, but it underperformed the batch-to-online conversion in our experiments.}
For each method, \cref{fig:long-form} (left) displays the cache size-vs.-accuracy trade-off curves for DeepSeek-R1-Distill-Llama-8B on MATH-500. Notably, \ours achieves the highest accuracy at nearly every cache size and matches the accuracy of exact attention while preserving only 61\% of the cache elements.
\para{Accelerating long-form decoding}\label{sec:decoding}
When runtime is the primary bottleneck, \ours can also be used to alleviate the computational burden of quadratic-time long-form decoding by reducing runtime per token while accurately preserving the exact attention output. 
To evaluate its potential for accelerating long-form decoding, we benchmark the runtime and accuracy of \ours applied to DeepSeek-R1-Distill-Llama-8B on competition-level MATH-500 reasoning problems. 
In \cref{fig:long-form} (right), we find that the \ours trade-off curve strongly dominates those of the five leading alternatives---StreamingLLM, SnapKV, ExpectedAttention, KeyDiff, and Knorm---and matches the accuracy of exact attention with only 56\% of the computation time. 
In line with our theory, we further find that the runtime overhead of maintaining the \ours cache is quite low when compared with the query cost of attending over the cache. 
For instance, across all benchmark questions, \ours ($\errsymb=10$) expends 35000 ms on average calling \wattn, compared with only 470 ms calling \khexpress.update().
\section{Conclusion}\label{sec:conclusion}
We introduced \express, a meta-procedure for causal masking, and \ours, a new causal attention approximation with per-token accuracy guarantees, constant memory, sub-quadratic query time, low compression overhead, and an efficient I/O-aware Triton GPU kernel.
In theory, we provide sharper and more resource-efficient guarantees than prior work, and, in practice, we demonstrate improved performance across four bottlenecks in the language modeling pipeline: long-context prefill, KV cache compression, long-form memory-constrained  decoding, and long-form compute-constrained decoding. 
\para{Limitations} However, this work is not without its limitations. 
For example, our experiments only cover English language modeling, Chinese language modeling,   %
and mathematical reasoning and have not yet evaluated the strengths and weaknesses of \express for other languages or domains. 
Moreover, our Triton implementation does not yet exploit warp specialization, the Tensor Memory Accelerator supported in newer GPU architectures, persistent kernels, or FP8 support as in FlashAttention 2 and 3 \citep{dao2024flashattention,shah2024flashattention}; each is directly applicable and expected to yield further practical speed-ups.
Finally, while we have focused on combining \express with the state-of-the-art kernel halving algorithm, \express is compatible with any halving algorithm, even those that have no known guarantees but perform well in practice. We leave an in-depth evaluation other base algorithms for future work.

\para{Broader impact}
By reducing the computational cost of attention, \express may lower the energy footprint of large-scale language model inference.
However, following Jevons paradox, efficiency improvements could also increase total usage, offsetting environmental gains.
More positively, lower inference cost broadens access to capable language models for users and devices with limited computational resources.

\begin{ack}
The authors thank Amir Zandieh and Insu Han for valuable advice concerning the long-context prefill experiments.
AG gratefully acknowledges support from the AI Research Institutes program of the National Science Foundation and Intel Corporation under NSF Award DMR-2433348.
\end{ack}

\bibliographystyle{plainnat} %
\bibliography{refs}

\begin{thebibliography}{34}
\providecommand{\natexlab}[1]{#1}
\providecommand{\url}[1]{\texttt{#1}}
\expandafter\ifx\csname urlstyle\endcsname\relax
  \providecommand{\doi}[1]{doi: #1}\else
  \providecommand{\doi}{doi: \begingroup \urlstyle{rm}\Url}\fi

\bibitem[Alweiss et~al.(2021)Alweiss, Liu, and Sawhney]{alweiss2020discrepancyminimizationselfbalancingwalk}
Ryan Alweiss, Yang~P Liu, and Mehtaab Sawhney.
\newblock Discrepancy minimization via a self-balancing walk.
\newblock In \emph{Proceedings of the 53rd Annual ACM SIGACT Symposium on Theory of Computing}, pages 14--20, 2021.

\bibitem[Bai et~al.(2024)Bai, Lv, Zhang, Lyu, Tang, Huang, Du, Liu, Zeng, Hou, Dong, Tang, and Li]{bai2024longbench}
Yushi Bai, Xin Lv, Jiajie Zhang, Hongchang Lyu, Jiankai Tang, Zhidian Huang, Zhengxiao Du, Xiao Liu, Aohan Zeng, Lei Hou, Yuxiao Dong, Jie Tang, and Juanzi Li.
\newblock Longbench: A bilingual, multitask benchmark for long context understanding.
\newblock In \emph{Proceedings of the 62nd annual meeting of the association for computational linguistics (volume 1: Long papers)}, pages 3119--3137, 2024.

\bibitem[Cai et~al.(2025)Cai, Zhang, Gao, Liu, Li, Liu, Lu, Xiong, Dong, Hu, et~al.]{cai2025pyramidkv}
Zefan Cai, Yichi Zhang, Bofei Gao, Yuliang Liu, Yucheng Li, Tianyu Liu, Keming Lu, Wayne Xiong, Yue Dong, Junjie Hu, et~al.
\newblock Pyramidkv: Dynamic kv cache compression based on pyramidal information funneling.
\newblock In \emph{Second Conference on Language Modeling}, 2025.

\bibitem[Carrell et~al.(2025)Carrell, Gong, Shetty, Dwivedi, and Mackey]{carrell2025low}
Annabelle~Michael Carrell, Albert Gong, Abhishek Shetty, Raaz Dwivedi, and Lester Mackey.
\newblock Low-rank thinning.
\newblock In \emph{International Conference on Machine Learning}, pages 6811--6848. PMLR, 2025.

\bibitem[Chazelle and Matousek(1996)]{chazelle1996linear}
Bernard Chazelle and Jiri Matousek.
\newblock On linear-time deterministic algorithms for optimization problems in fixed dimension.
\newblock \emph{Journal of Algorithms}, 21\penalty0 (3):\penalty0 579--597, 1996.

\bibitem[Dao(2024)]{dao2024flashattention}
Tri Dao.
\newblock Flashattention-2: Faster attention with better parallelism and work partitioning.
\newblock In \emph{The Twelfth International Conference on Learning Representations}, 2024.
\newblock URL \url{https://openreview.net/forum?id=mZn2Xyh9Ec}.

\bibitem[Dao et~al.(2022)Dao, Fu, Ermon, Rudra, and R{\'e}]{dao2022flashattention}
Tri Dao, Dan Fu, Stefano Ermon, Atri Rudra, and Christopher R{\'e}.
\newblock Flashattention: Fast and memory-efficient exact attention with io-awareness.
\newblock \emph{Advances in {N}eural {I}nformation {P}rocessing {S}ystems}, 35:\penalty0 16344--16359, 2022.

\bibitem[Devoto et~al.(2024)Devoto, Zhao, Scardapane, and Minervini]{devoto2024simple}
Alessio Devoto, Yu~Zhao, Simone Scardapane, and Pasquale Minervini.
\newblock A simple and effective l\_2 norm-based strategy for kv cache compression.
\newblock In \emph{Proceedings of the 2024 Conference on Empirical Methods in Natural Language Processing}, pages 18476--18499, 2024.

\bibitem[Devoto et~al.(2025)Devoto, Jeblick, and J{\'e}gou]{devoto2025expectedattention}
Alessio Devoto, Maximilian Jeblick, and Simon J{\'e}gou.
\newblock Expected attention: Kv cache compression by estimating attention from future queries distribution.
\newblock \emph{arXiv preprint arXiv:2510.00636}, 2025.

\bibitem[Dwivedi and Mackey(2022)]{dwivedi2021generalized}
Raaz Dwivedi and Lester Mackey.
\newblock Generalized kernel thinning.
\newblock In \emph{International Conference on Learning Representations}, 2022.

\bibitem[Dwivedi and Mackey(2024)]{dwivedi2024kernel}
Raaz Dwivedi and Lester Mackey.
\newblock Kernel thinning.
\newblock \emph{Journal of Machine Learning Research}, 25\penalty0 (152):\penalty0 1--77, 2024.

\bibitem[Glm et~al.(2024)Glm, Zeng, Xu, Wang, Zhang, Yin, Zhang, Rojas, Feng, Zhao, et~al.]{glm2024chatglm}
Team Glm, Aohan Zeng, Bin Xu, Bowen Wang, Chenhui Zhang, Da~Yin, Dan Zhang, Diego Rojas, Guanyu Feng, Hanlin Zhao, et~al.
\newblock Chatglm: A family of large language models from glm-130b to glm-4 all tools.
\newblock \emph{arXiv preprint arXiv:2406.12793}, 2024.

\bibitem[Grattafiori et~al.(2024)Grattafiori, Dubey, Jauhri, Pandey, Kadian, Al-Dahle, Letman, Mathur, Schelten, Vaughan, et~al.]{grattafiori2024llama}
Aaron Grattafiori, Abhimanyu Dubey, Abhinav Jauhri, Abhinav Pandey, Abhishek Kadian, Ahmad Al-Dahle, Aiesha Letman, Akhil Mathur, Alan Schelten, Alex Vaughan, et~al.
\newblock The llama 3 herd of models.
\newblock \emph{arXiv preprint arXiv:2407.21783}, 2024.

\bibitem[Guo et~al.(2025)Guo, Yang, Zhang, Song, Wang, Zhu, Xu, Zhang, Ma, Bi, et~al.]{guo2025deepseek}
Daya Guo, Dejian Yang, Haowei Zhang, Junxiao Song, Peiyi Wang, Qihao Zhu, Runxin Xu, Ruoyu Zhang, Shirong Ma, Xiao Bi, et~al.
\newblock Deepseek-r1 incentivizes reasoning in llms through reinforcement learning.
\newblock \emph{Nature}, 645\penalty0 (8081):\penalty0 633--638, 2025.

\bibitem[Han et~al.(2024)Han, Jayaram, Karbasi, Mirrokni, Woodruff, and Zandieh]{han2024hyperattention}
Insu Han, Rajesh Jayaram, Amin Karbasi, Vahab Mirrokni, David Woodruff, and Amir Zandieh.
\newblock Hyperattention: Long-context attention in near-linear time.
\newblock In \emph{The Twelfth International Conference on Learning Representations}, 2024.
\newblock URL \url{https://openreview.net/forum?id=Eh0Od2BJIM}.

\bibitem[Hendrycks et~al.(2021)Hendrycks, Burns, Kadavath, Arora, Basart, Tang, Song, and Steinhardt]{hendrycks2021measuring}
Dan Hendrycks, Collin Burns, Saurav Kadavath, Akul Arora, Steven Basart, Eric Tang, Dawn Song, and Jacob Steinhardt.
\newblock Measuring mathematical problem solving with the {MATH} dataset.
\newblock In \emph{Thirty-fifth Conference on Neural Information Processing Systems Datasets and Benchmarks Track (Round 2)}, 2021.
\newblock URL \url{https://openreview.net/forum?id=7Bywt2mQsCe}.

\bibitem[Hoeffding(1963)]{409cf137-dbb5-3eb1-8cfe-0743c3dc925f}
Wassily Hoeffding.
\newblock Probability inequalities for sums of bounded random variables.
\newblock \emph{Journal of the American Statistical Association}, 58\penalty0 (301):\penalty0 13--30, 1963.
\newblock ISSN 01621459, 1537274X.
\newblock URL \url{http://www.jstor.org/stable/2282952}.

\bibitem[Joshi et~al.(2017)Joshi, Choi, Weld, and Zettlemoyer]{joshi2017triviaqa}
Mandar Joshi, Eunsol Choi, Daniel~S Weld, and Luke Zettlemoyer.
\newblock Triviaqa: A large scale distantly supervised challenge dataset for reading comprehension.
\newblock In \emph{Proceedings of the 55th Annual Meeting of the Association for Computational Linguistics (Volume 1: Long Papers)}, pages 1601--1611, 2017.

\bibitem[Kochetkova et~al.(2025)Kochetkova, Sheth, Han, Zandieh, and Kapralov]{han2025streaming}
Ekaterina Kochetkova, Kshiteej Sheth, Insu Han, Amir Zandieh, and Michael Kapralov.
\newblock Streaming attention approximation via discrepancy theory.
\newblock In \emph{Advances in Neural Information Processing Systems}, 2025.

\bibitem[Li and Roth(2002)]{li2002learning}
Xin Li and Dan Roth.
\newblock Learning question classifiers.
\newblock In \emph{COLING 2002: The 19th International Conference on Computational Linguistics}, 2002.

\bibitem[Li et~al.(2024)Li, Huang, Yang, Venkitesh, Locatelli, Ye, Cai, Lewis, and Chen]{li2024snapkv}
Yuhong Li, Yingbing Huang, Bowen Yang, Bharat Venkitesh, Acyr Locatelli, Hanchen Ye, Tianle Cai, Patrick Lewis, and Deming Chen.
\newblock Snapkv: Llm knows what you are looking for before generation.
\newblock In \emph{Advances in Neural Information Processing Systems}, volume~37, 2024.

\bibitem[Lightman et~al.(2023)Lightman, Kosaraju, Burda, Edwards, Baker, Lee, Leike, Schulman, Sutskever, and Cobbe]{lightman2023let}
Hunter Lightman, Vineet Kosaraju, Yuri Burda, Harrison Edwards, Bowen Baker, Teddy Lee, Jan Leike, John Schulman, Ilya Sutskever, and Karl Cobbe.
\newblock Let's verify step by step.
\newblock In \emph{The twelfth international conference on learning representations}, 2023.

\bibitem[Matousek(1995)]{matousek1995approximations}
Jiri Matousek.
\newblock Approximations and optimal geometric divide-and-conquer.
\newblock \emph{Journal of Computer and System Sciences}, 50\penalty0 (2):\penalty0 203--208, 1995.

\bibitem[Park et~al.(2025)Park, Jones, Morse, Goel, Lee, and Lott]{parkkeydiff}
Junyoung Park, Dalton Jones, Matthew~J Morse, Raghavv Goel, Mingu Lee, and Christopher Lott.
\newblock Keydiff: Key similarity-based kv cache eviction for long-context llm inference in resource-constrained environments.
\newblock In \emph{The Thirty-ninth Annual Conference on Neural Information Processing Systems}, 2025.

\bibitem[Phillips(2008)]{phillips2008algorithms}
Jeff~M Phillips.
\newblock Algorithms for $\varepsilon$-approximations of terrains.
\newblock In \emph{International Colloquium on Automata, Languages, and Programming}, pages 447--458. Springer, 2008.

\bibitem[Schr{\"o}der and Mackey(2026)]{schroder2026wildcat}
Tobias Schr{\"o}der and Lester Mackey.
\newblock Wildcat: Near-linear attention in theory and practice.
\newblock \emph{arXiv preprint arXiv:2602.10056}, 2026.

\bibitem[Shah et~al.(2024)Shah, Bikshandi, Zhang, Thakkar, Ramani, and Dao]{shah2024flashattention}
Jay Shah, Ganesh Bikshandi, Ying Zhang, Vijay Thakkar, Pradeep Ramani, and Tri Dao.
\newblock Flashattention-3: Fast and accurate attention with asynchrony and low-precision.
\newblock \emph{Advances in Neural Information Processing Systems}, 37:\penalty0 68658--68685, 2024.

\bibitem[Shetty et~al.(2022)Shetty, Dwivedi, and Mackey]{shetty2022distribution}
Abhishek Shetty, Raaz Dwivedi, and Lester Mackey.
\newblock Distribution compression in near-linear time.
\newblock In \emph{International Conference on Learning Representations}, 2022.

\bibitem[Steinwart and Christmann(2008)]{Steinwart2008SupportVM}
Ingo Steinwart and Andreas Christmann.
\newblock Support vector machines.
\newblock \emph{Wiley Interdisciplinary Reviews: Computational Statistics}, 1, 2008.
\newblock URL \url{https://api.semanticscholar.org/CorpusID:661123}.

\bibitem[Tillet et~al.(2019)Tillet, Kung, and Cox]{tillet2019triton}
Philippe Tillet, Hsiang-Tsung Kung, and David Cox.
\newblock Triton: an intermediate language and compiler for tiled neural network computations.
\newblock In \emph{Proceedings of the 3rd ACM SIGPLAN International Workshop on Machine Learning and Programming Languages}, pages 10--19, 2019.

\bibitem[Vaswani et~al.(2017)Vaswani, Shazeer, Parmar, Uszkoreit, Jones, Gomez, Kaiser, and Polosukhin]{vaswani2017attention}
Ashish Vaswani, Noam Shazeer, Niki Parmar, Jakob Uszkoreit, Llion Jones, Aidan~N. Gomez, \L{}ukasz Kaiser, and Illia Polosukhin.
\newblock Attention is all you need.
\newblock In \emph{Proceedings of the 31st International Conference on Neural Information Processing Systems}, NIPS'17, page 6000–6010, Red Hook, NY, USA, 2017. Curran Associates Inc.
\newblock ISBN 9781510860964.

\bibitem[Xiao et~al.(2024)Xiao, Tian, Chen, Han, and Lewis]{xiao2024streamingllm}
Guangxuan Xiao, Yuandong Tian, Beidi Chen, Song Han, and Mike Lewis.
\newblock Efficient streaming language models with attention sinks.
\newblock In \emph{International Conference on Learning Representations}, 2024.
\newblock URL \url{https://openreview.net/forum?id=NG7sS51zVF}.

\bibitem[Yang et~al.(2018)Yang, Qi, Zhang, Bengio, Cohen, Salakhutdinov, and Manning]{yang2018hotpotqa}
Zhilin Yang, Peng Qi, Saizheng Zhang, Yoshua Bengio, William Cohen, Ruslan Salakhutdinov, and Christopher~D Manning.
\newblock Hotpotqa: A dataset for diverse, explainable multi-hop question answering.
\newblock In \emph{Proceedings of the 2018 conference on empirical methods in natural language processing}, pages 2369--2380, 2018.

\bibitem[Zandieh et~al.(2023)Zandieh, Han, Daliri, and Karbasi]{zandieh2023kdeformer}
Amir Zandieh, Insu Han, Majid Daliri, and Amin Karbasi.
\newblock Kdeformer: Accelerating transformers via kernel density estimation.
\newblock In \emph{International Conference on Machine Learning}, pages 40605--40623. PMLR, 2023.

\end{thebibliography}

\newpage
\appendix

\numberwithin{mylemma}{section}
\numberwithin{myproposition}{section}
\numberwithin{mydefinition}{section}
\numberwithin{theorem}{section}
\numberwithin{mycorollary}{section}
\numberwithin{myremark}{section}
\numberwithin{condition}{section}
\numberwithin{problem}{section}
\numberwithin{myassumption}{section}
\numberwithin{property}{section}
\numberwithin{example}{section}
\numberwithin{algocf}{section}
\numberwithin{figure}{section}
\numberwithin{table}{section}

\section{Kernel Halving}\label{app:appendix-algorithms}\label{app:subg_thinning_algorithms}

For completeness, we reproduce the \khd halving algorithm of \citet[Alg.~B.1]{carrell2025low}.
\begin{algorithm2e}[ht!]
\caption{\khd: Kernel halving with simplified swapping thresholds and failure probability $\delta/2$}
\label{algo:khd}
\small{
  \KwIn{point sequence $\xin=(\x_i)_{i = 1}^{\nin}$ with even $\nin$, kernel $\kernel$}
  \BlankLine
  {$\coreset[1], \coreset[2] \gets \braces{}$};\quad $\multiplier_{\max,0} \gets 0$\quad // Initialize empty coresets and max function norm \\
  \For{$i=1, 2, \ldots, \nin/2$}
    {%
    // Construct kernel difference function using next two points \\
    $(\x, \x') \gets (\x_{2i-1}, \x_{2i})$;\quad
    $f_i \gets \kernel(\x_{2i-1}, \cdot)-\kernel(\x_{2i}, \cdot)$ \\
	 \BlankLine
     // Compute swapping threshold $\thresh_i$ \\ %
     $\multiplier_i^2 \!=\! \norm{f_i}_{\kernel}^2
      \!=\! \kernel(\x,\x)\!+\!\kernel(\x',\x')\!-\!2\kernel(\x,\x')$;\quad $\multiplier_{\max,i} = \max(\multiplier_i, \multiplier_{\max,i-1})$ \\
     $ \thresh_i \gets \multiplier_i \multiplier_{\max,i}(\half + \log(2\nin/\delta))$
     \ \\
   \BlankLine
    // Compute the balancing score for the next pair \\
    $\alpha_i\gets  \sum_{j=1}^{2i-2}(\kernel
	 (\x_j, \x)-\kernel(\x_j,\x'))
	 - 2\sum_{\z\in\coreset[1]}(\kernel(\z, \x)-\kernel(\z,\x'))$ \\
    \BlankLine
			 // Assign one point to each coreset after probabilistic swapping \\[2pt]
		     $(\x, \x') \gets (\x', \x)$ \qtext{\textit{with probability}} $\min(1, \half (1-\frac{\alpha_i}{\thresh_i})_+)$ \\
           $\coreset[1]\texttt{.append}(\x);
		        \ \ \  \coreset[2]\texttt{.append}(\x')$
  }
  \KwRet{\textup{$\xout\defeq\coreset[1]$, coreset of size} $\nout=\nin/2$}{}
  }
\end{algorithm2e}

\section{\pcref{express-guarantees}}
\label{proof:express-guarantees}

\paragraph{Proof of runtime claim~\cref{eq:express-runtime}.}
For $n < 4\nout$, \express has not yet called \halve, and \compresstwo with $m=0$ stores every point without halving, so $r_{\expresstag}(n) = 0$. 

Next suppose $n \geq 4\nout$. 
Fix any \express round $m \in \braces{0,2,4,\ldots}$, and let $m_\star \defeq m \wedge \mbar$.
If $m \leq \mbar$, then $S.\keep(\x)$ is always true, so \subsample does not filter any points. 
If $m > \mbar$, \subsample (\cref{algo:subsample,algo:subsample-recursive}) partitions the level-$m$ block into $2^{\mbar}\nout$ strata, each of size $2^{m-\mbar}$, and draws one point from each stratum.
Since the sampler draws one index per output stratum, the total cost to subsample $2^{\mbar}\nout$ points from $2^{m}\nout$ is $2^{\mbar}\nout$ uniform sampling operations. Since $r_{\compresstag}(m_\star)$ is the cost of $\compresstwo(m_\star,2^{m_\star}\nout,\halve_m)$, 
$r_{\halvetag}(4 \nout)
    +
r_{\halvetag}(2 \nout)$
is the cost of the \halve phase, 
and three Thin phases (each thinning $2^m\nout$ points) occur in \express prior to the \halve phase,  $r_{\expresstag}$ satisfies the  recursion $r_{\expresstag}(\nout) = 0$ and 
\begin{talign}\label{eq:express-runtime-1}
r_{\expresstag}(2^{m+2}\nout )
    &=
r_{\expresstag}(2^{m}\nout )
    +
3\indicator\braces{m > \mbar} 2^{\mbar}\nout
    +
3r_{\compresstag}(m_\star)
    +
r_{\halvetag}(4 \nout)
    +
r_{\halvetag}(2 \nout).
\end{talign}
Unrolling the recursion \cref{eq:express-runtime-1} yields the runtime 
\begin{talign}\label{eq:express-runtime-detailed}
\!\!\!r_{\expresstag}(n) 
    &=
\indic{n\geq 4\nout}\sum_{k=0}^{K_n-1}\! \big(
r_{\halvetag}(4 \nout) + r_{\halvetag}(2 \nout)
    +
3r_{\compresstag}\parenth{(2k) \wedge \mbar}
    +
3 \indic{k>\frac{\mbar}{2}} 2^{\mbar}\nout\big) \\
    &\leq 
\indic{n\geq 4\nout}K_n\big(
r_{\halvetag}(4 \nout) + r_{\halvetag}(2 \nout)
    +
3r_{\compresstag}\parenth{\mbar}
    +
3 \cdot 2^{\mbar}\nout\big)
\end{talign}
for $K_n \defeq \ceil{\log_4(n/\nout)}$ whenever $n=2^{2K_n}\nout$. 
Finally, for intermediate values of $n$, we have $r_{\expresstag}(n) \leq r_{\expresstag}(2^{2K_n}\nout)$ since $r_{\expresstag}$ is non-decreasing.

\paragraph{Proof of space claim.}
For $n\leq \nout$, \express stores all input points and the space claim follows immediately.
Now suppose $n>\nout$.
A convenient way of computing the weighted cache size is to consider numbers in base 4.
At the end of an update call in \cref{algo:express}, let $m$ be the current thinning factor, $q \defeq m\wedge\mbar$, and $b$ be the completed Thin phases since the last \halve phase. 
After an update returns, three completed Thin phases trigger a \halve phase, so $b\leq 2$.
At this point, $E$ contains $\nout$ points (from the Exact or most recent \halve phase) plus $b\nout$ points from the concatenation event at the end of each Thin phase (Line~\ref{line:express-store-compress2}), so $\abss{E}=(b+1)\nout$.

Let $u=\sum_{i=0}^{q-1} |\cset_i|$ be the number of points retained in $C.\wtdcache()$ and $u_{\expresstag} = u + |E|$. 
Define $x,a$ to be the integers satisfying $u = \nout 2^{2-q}x + a$ and $0 \leq a < \nout 2^{2-q}$.
Let $x = \sum_{i=1}^{q-1} 4^{i-1} d_i$, where $d_i \in \{0,1,2,3\}$ for all $i$.
Then the current \compresstwo cache satisfies $\abss{\cset_0}=a$ and $\abss{\cset_i}=\nout 2^{i-q}d_i$ for $i=1,\ldots,q-1$.
Thus 
\begin{talign}\label{express_cache_size}
u_{\expresstag}
    =
(b+1)\nout + a + \sum_{i=1}^{q-1}\nout 2^{i-q} d_i.
\end{talign}
If $q=0$, then $u\leq\nout$ points, so $u_{\expresstag} \leq 4\nout$ because $b\leq 2$.
If $q\geq 1$, then the worst case has $b=2$, $a$ arbitrarily close to $\nout 2^{2-q}$, and $d_i=3$ for all $i$.
In that case, 
\begin{talign}
u
    =
\nout 2^{2-q} + 3\nout \sum_{i=1}^{q-1} 2^{i-q}
    =
\parenth{3-2^{1-q}}\nout.
\end{talign}
Adding $|E|$ gives $(6-2^{1-q})\nout < 6\nout$.
Hence, for all $n\in\naturals$,
\begin{talign}\label{max_cache_size}
u_{\expresstag}
    =
(4+b-2^{1-q})\nout
    <
6\nout.
\end{talign}

Now suppose \halve with input size $\nin$ uses at most $s_{\halvetag}(\nin)$ space for nondecreasing $s_{\halvetag}$.
Since \express calls \halve sequentially, it can reuse the same memory for each \halve call.
The \compresstwo calls at level $q$ invoke \halve on inputs of size $\nout 2^{2-q+i}$ for $i=0,\ldots,q-1$, which are at most $2\nout$, while the two warm-start \halve calls in \cref{algo:express} have input sizes $4\nout$ and $2\nout$.
Thus the temporary halving workspace is bounded by $s_{\halvetag}(4\nout)$, and
\begin{talign}\label{express_storage_cost}\label{eq:streaming_express_storage_cost}
s_{\expresstag}(n) \leq
\begin{cases}
    nd & \text{if } n \leq \nout, \\
    u_{\expresstag}d + s_{\halvetag}(4\nout) & \text{otherwise}.
\end{cases}
\end{talign}
Combining this display with \cref{max_cache_size} yields our claim.

\section{\pcref{rmk:express-polynomial-runtime}}
\label{proof:express-polynomial-runtime}

Suppose $r_{\halvetag}(\nin)\le\rho\nin^\tau$ for all $\nin\in[4\nout]$. 
The claim is immediate for $n<4\nout$, so suppose $n\geq4\nout$ with $K_n \defeq \ceil{\log_4(n/\nout)}$ and $L_n\defeq K_n\wedge(\floor{\mbar/2}+1)$.
\cref{express-guarantees} and the constraint $2^{\mbar}\leq 2\nout$ imply that
\begin{talign}
r_{\expresstag}(n) 
    &\leq
\rho K_n(4^\tau+2^\tau) \nout^{\tau}
    +
6 K_n\nout^2
    +
3K_nr_{\compresstag}\parenth{\mbar}. \label{eq:ren}
\end{talign}

It remains to bound $r_{\compresstag}$.
If $\tau=2$, our claim follows from \cref{eq:ren} and the estimate
\begin{talign}
r_{\compresstag}(\mbar)
    &\le
\sum_{j=0}^{\mbar-1}4^j\rho\parenth{\nout 2^{1-j}}^2
    =
4\rho \mbar\nout^2
    \le
4\rho \log_2(2\nout)\nout^2.
\end{talign}
Next suppose $\tau\neq 2$. 
Then, since $2^{\mbar}\leq 2\nout$,
\begin{talign}
r_{\compresstag}(\mbar)
    &\leq
\sum_{j=0}^{\mbar-1}4^j\rho\parenth{\nout 2^{1-j}}^\tau
    =
\rho(2\nout)^\tau 
\sum_{j=0}^{\mbar-1}2^{(2-\tau)j}
    =
\rho \frac{(2\nout)^\tau|1-2^{(2-\tau)\mbar}|}{|1-2^{2-\tau}|}
\\
    &\leq \label{eq:rcm-unequal}
\rho \frac{(2\nout)^\tau|1-(2\nout)^{2-\tau}|}{|1-2^{2-\tau}|}
    =
\rho 2^\tau\frac{|(2\nout)^\tau-(2\nout)^{2}|}{|2^\tau-2^{2}|}.
\end{talign}
Our $\tau\neq 2$ claim now follows immediately from the estimates \cref{eq:ren,,eq:rcm-unequal}.

\section{\pcref{express-quality}}
\label{proof:express-quality}

We begin by recalling the definition of sub-Gaussianity on an event $\event$ \citep[Def.~5]{shetty2022distribution}.
\begin{definition}[\tbf{Sub-Gaussian on an event}]\label{def:subg}
We say a random variable $X$ is $\subg$-sub-Gaussian on an event $\event$ if $\subg \ge 0$ and 
\begin{talign}
\Esub{\event}\brackets{\exp(tX)} \leq \exp\parenth{\frac{\subg^2t^2}{2}}
    \qtext{for all} 
t \in \reals.
\end{talign}
\end{definition}

For each $j\in[n]$, we note that  $m_j$ %
is the associated thinning factor at the start of the $\express.\update(\x_j)$ call and define its truncation $q_j \defeq m_j \wedge \mbar$. 
Within the first $j$ \express update calls, $\halve_{m,m}$ is called at most twice and each $(\halve_{m,i})_{i=0}^{m\wedge \mbar-1}$ is called at most $3\cdot 4^{\parenth{m\wedge\mbar}-i-1}$ times for each even $m\in \{0,\dots,m_j\}$. 
Therefore, by our sub-Gaussian thinning assumptions and the union bound, there exists an event $\event$ of probability at least
\begin{talign}
1-\half\sum_{\textup{even } m=0}^{m_n}({2\delta_{m,m}+3\sum_{i=0}^{\parenth{m\wedge\mbar}-1}4^{\parenth{m\wedge\mbar}-i-1}\delta_{m,i}})
\end{talign}
on which, for all $j\in[n]$, each  $\halve_{m,i}$ call invoked within the first $j$ update calls is $\subgh\!(\nin)$ sub-Gaussian on $\event$ given its input.

Now fix any $j\in[n]$. %
If $j<4\nout$, then the \express approximation is exact and hence $\subge(j)=0$, so assume $j\geq4\nout$.
Our ultimate goal is to bound the $\event$-sub-Gaussian constant,
\begin{align}
\subge(j)
    =
\sup_{f\in\rkhs: \knorm{f}=1} \textstyle\sqrt{\frac{2}{\knorm{f}^2}\log(\Esub{\event}[\exp(\frac{1}{j}\sum_{i=1}^{j} f(\x_i) - \sum_{(\x,w)\in\xout(j)} w\,f(\x))])}. 
\end{align}
To achieve this, we fix $f\in\rkhs$ with $\knorm{f}=1$ and define, for each $(\kappa,\subg,\event)$-sub-Gaussian thinning algorithm \alg acting on an input dataset $\xin$, the \emph{unnormalized discrepancy}
\begin{talign}
\psi_{\alg}
    =
\nin(\frac{1}{\nin}\sum_{\x\in\xin} f(\x) - \sum_{(\x,w)\in\xout(j)} w\,f(\x)).
\end{talign}
Notably, by \cref{def:alg-subg,def:subg}, $\psi_{\alg}$ is $\sigma=\nin\subg$ sub-Gaussian on $\event$ given $\xin$.

We will bound $\subge(j)$ by first controlling the $\event$-sub-Gaussian constant $\sige(j)$ of the \express unnormalized discrepancy
\begin{talign}
\psie(j)
    \defeq
j(\frac{1}{j}\sum_{i=1}^{j} f(\x_i) -  \sum_{(\x,w)\in\xout(j)} w\,f(\x)).
\end{talign}
Our primary insight is that $\psie(j)$ decomposes into the error contributed by each \halve call and each stratified subsampling operation.  That is,
\begin{talign}
\psie(j)
    =
\sum_{\even\, m=0}^{m_j}
    &\big[
2^{m} \psi_{\halvetag_{m,m}}(4\nout)
    +
2^{m+1} \psi_{\halvetag_{m,m}}(2\nout) 
    +
\sum_{k=1}^{c_m}\psi_{\subsampletag_{m,k}} \indicator\brackets{m>\mbar}
\\
    &+
\sum_{i=0}^{(m\wedge \mbar)-1}
\sum_{k=1}^{c_{m,i}}
2^{i+m-(m\wedge \mbar)}\psi_{\halvetag_{m,i}}^{(k)}%
\big]
\end{talign}
where 
\begin{itemize}
    \item $\psi_{\halvetag_{m,m}}(4\nout)$ and $\psi_{\halvetag_{m,m}}(2\nout)$ are the unnormalized discrepancies associated with the first and second \express calls to $\halve_{m,m}$, 
    \item $c_{m,i}$ is the number of times $\halve_{m,i}$ is called within the first $j$ \express updates , 
    \item $\psi_{\halvetag_{m,i}}^{(k)}$ is the unnormalized discrepancy associated with the $k$-th $\halve_{m,i}$ call, 
    \item $c_m$ is the number of strata of size $N_m\defeq 2^{m-\mbar}$ previously processed or being processed by $\subsample(m,\mbar)$ by the end of the $j$-th update, 
    \item $\stratum\defeq(\x_{m,k,i})_{i=1}^{N_m}$ is the subsequence of points corresponding to the $k$-th stratum of size $N_m$ processed by $\subsample(m,\mbar)$,
    \item $n_{m,k}$ is the number of points from $\stratum$ processed by $\subsample(m,\mbar)$ by the end of the $j$-th update,
    \item $I_{m,k}\sim\Unif([N_m])$ is the index of the point selected by $\subsample(m,\mbar)$ from $\stratum$, and
    \item $\psi_{\subsampletag_{m,k}} = (\sum_{i=1}^{N_m}\indic{i\leq n_{m,k}}f(\x_{m,k,i}))-
    N_m\indic{I_{m,k}\leq n_{m,k}}f(\x_{m,k,I_{m,k}})$
    is the unnormalized discrepancy from subsampling $\stratum$ at the end of the $j$-th update.
\end{itemize}

Since $\indic{I_{m,k}\leq n_{m,k}}f(\x_{m,k,I_{m,k}})$ is a uniform subsample, \citet[(4.16)]{409cf137-dbb5-3eb1-8cfe-0743c3dc925f} and \citet[(4.8)]{Steinwart2008SupportVM} imply that $\psi_{\subsampletag_{m,k}}$ is sub-Gaussian with 
\begin{talign}
\sig_{\subsampletag_m}
    \leq 
N_m\max_{i\in[j]} |f(\x_i)|
    \leq 
N_m\knorm{f} \max_{i\in[j]} \sqrt{\kernel(\x_i,\x_i)}
    =
N_m\sqrt{\kin[j]}.
\end{talign}
Moreover, since each successive %
$\halve_{m,i}$ call is $\subgh\!(\nin)$ 
sub-Gaussian on $\event$ given its input, each $\psi_{\halvetag_{m,m}}(\nin)$ is $\sig_{m}(\nin) = \nin\subgh\!(\nin)$ sub-Gaussian on $\event$ given its input, and each $\psi_{\halvetag_{m,i}}^{(k)}$ is $\sig_{m,i} = 2^{i-(m\wedge \mbar)+2}\nout\subgh\!(2^{i-(m\wedge \mbar)+2}\nout)$ on $\event$ given its input. 

Sub-Gaussian additivity \citep[Lem.~14]{dwivedi2024kernel} and the bounds $c_m \leq 3\cdot 2^{\mbar}\nout$ and $c_{m,i}\leq 3\cdot 4^{\parenth{m\wedge\mbar}-i-1}$ therefore imply that $\psie(j)$ is sub-Gaussian on $\event$ with 
\begin{talign}
&\sige^2(j) 
    \leq
\sum_{\even\, m=0}^{m_j}
    \big[
c_m\sig_{\subsampletag_m}^2\indic{m > \mbar}
    +
4^{m} \sig_{m}^2(4\nout)
    +
4^{m+1} \sig_{m}^2(2\nout) \\
    &\hspace{5\baselineskip}+
\sum_{i=0}^{(m\wedge \mbar)-1}
\sum_{k=1}^{c_{m,i}}
4^{i+m-(m\wedge \mbar)}\sig_{m,i}^2
\big] \\
    &\leq
\sum_{\even\, m=0}^{m_j}
    \big[
\frac{4^{m} 3 \kin[j]\nout\indic{m>\mbar}}{2^{\mbar}}
    + 
4^{m} \sig_{m}^2(4\nout)
    +
4^{m+1} \sig_{m}^2(2\nout) 
    +   
3\sum_{i=0}^{(m\wedge \mbar)-1}
4^{m-1}\sig_{m,i}^2
\big] \\
    &\leq
\frac{4^{m_j+2} 
\kin[j]\nout\indic{m_j>\mbar}}{2^{\mbar} 5}
    +
\sum_{\even\, m=0}^{m_j}
    4^{m+2} \nout^2\big[
\subgh^2\!(4\nout)
    +
\subgh^2\!(2\nout) \\
    &\hspace{10\baselineskip}+   
3\sum_{i=0}^{(m\wedge \mbar)-1}
4^{i-(m\wedge \mbar)-1}\subgh^2\!(2^{i-(m\wedge \mbar)+2}\nout)
    \big]. \label{eq:express-sig}
\end{talign}
Since the final bound is independent of the arbitrary test function $f$, the same upper bound holds for $j^2\subge^2(j)$.

Now, define $\subg_j^2\defeq \subgh^2\!(4\nout)+\subgh^2\!(2\nout)$. 
Our monotonicity assumptions imply 
\begin{talign}
\subgh^2\!(4\nout)
    +
\subgh^2\!(2\nout)
    &\leq
\subg_j^2, 
    \qtext{and} \\
4^{i-(m\wedge \mbar)+2}\subgh^2\!(2^{i-(m\wedge \mbar)+2}\nout)
    &\leq
4\subgh^2\!(2\nout)
    \leq
4\subg_j^2
    \qtext{for all}
i \leq (m\wedge \mbar) - 1.
\end{talign}
Therefore, by the bound \cref{eq:express-sig}, the definition $K_j\defeq (m_j+2)/2$, 
and the constraint $\mbar \leq \log_2(2\nout)$,
\begin{talign}
\subge^2(j)
    &\leq 
\frac{4^{m_j+2} 
\kin[j]\nout\indic{m_j>\mbar}}{2^{\mbar} 5j^2}
    \!+\!
\frac{\indic{j\geq 4\nout}}{j^2}
\sum_{\even\, m=0}^{m_j}
    4^{m+2} \nout^2\subg_j^2(1 + \frac{3}{16}(m\wedge \mbar)) \\
    &\leq 
\frac{16^{K_j} 
\kin[j]\nout\indic{m_j>\mbar}}{2^{\mbar} 5j^2}
    \!+\!
\nout^2\subg_j^2(1+\frac{3}{16}q_j)
\frac{\indic{j\geq 4\nout}}{j^2}\sum_{k=1}^{K_j} 16^{k} \\
    &=
\frac{16^{K_j} 
\kin[j]\nout\indic{m_j>\mbar}}{2^{\mbar} 5j^2}
    \!+\!
\nout^2\subg_j^2(1 + \frac{3}{16}q_j)\frac{16}{15}\frac{16^{K_j}-1}{j^2}\\
    &\leq
\frac{16 
\kin[j]\indic{m_j>\mbar}}{5\nout 2^{\mbar}}
    \!+\!
\subg_j^2(16 + 3q_j)\frac{16}{15}
    \leq
\frac{16 
\kin[j]\indic{m_j>\mbar}}{5\nout 2^{\mbar}}
    \!+\!
\subg_j^2(16 + 3\log_2(2\nout))\frac{16}{15} \\
    &=
\frac{16 
\kin[j]\indic{m_j>\mbar}}{5\nout 2^{\mbar}}
    \!+\!
\subg_j^2(1 + 3\log_2(64\nout))\frac{16}{15}
    \leq
\frac{16 
\kin[j]\indic{m_j>\mbar}}{5\nout 2^{\mbar}}
    \!+\!
4\log_2(64\nout)\subg_j^2.
\end{talign}

\section{\pcref{khexpress-guarantees}}
\label{proof:khexpress-guarantees}

\paragraph{Proof of runtime claim~\cref{eq:khexpress-runtime}.}
The \khd implementation used in \khexpressd has runtime $r_{\halvetag}(\nin)=O(d\nin^2)$.
Hence, the runtime claim follows immediately from \cref{rmk:express-polynomial-runtime} with $\tau=2$.

\paragraph{Proof of space claim.}
\cref{express-guarantees} and the $s_{\halvetag}(\nin)=\nin d$ space complexity of \khd imply that
\begin{talign}
s_{\expresstag}(n)\leq 6\nout d+s_{\halvetag}(4\nout) \leq 10\nout d.
\end{talign}

\paragraph{Proof of quality claim~\cref{eq:khexpress-error}.}
For each $n < 4\nout$, $\xout(n)$ represents the first $n$ points exactly, so $\subge(n)=0$.
For each $n\geq 4\nout$, let $m_n$ be the associated thinning factor at the start of the $\express.\update(\x_n)$ call, and let $\dstar[n]$ represent the smallest failure probability used by any \kh call within the first $n$ updates.  
Since $\delta_m$ is decreasing in $m$ and $\mbar\geq 1$, we have 
\begin{talign}
\dstar[n]
    &\geq
\min(\half,\frac{4^{1-(m_n\wedge\mbar)}}{3(m_n\wedge\mbar)})\delta_{m_n}
    \geq
\frac{4}{3\mbar 4^{\mbar}}\delta_{m_n}
    =
\frac{\delta}{2}\frac{4}{3\mbar 4^{\mbar}}
\frac{\log_2(1+2/(m_n+4))}{\log_2(m_n/2+2)\log_2(m_n/2+3)}\\
    &\geq
\frac{4\delta}{3\mbar 4^{\mbar}(m_n+5)\ln(2)\log_2(m_n/2+2)\log_2(m_n/2+3)},
\label{dstar-lower-bound}
\end{talign}
where the final inequality follows from \cref{lem:log2-one-plus-lower}.
\begin{lemma}[Logarithm lower bound]\label{lem:log2-one-plus-lower}
For all $x>0$,
$\log_2(1+x)
    \geq
\frac{2}{(1+2/x)\ln(2)}.$
\end{lemma}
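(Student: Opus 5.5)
The claim is $\log_2(1+x)\ge \frac{2}{(1+2/x)\ln 2}$ for all $x>0$. Multiplying by $\ln 2>0$ reduces it to the natural-log inequality
\begin{talign}
\ln(1+x) \geq \frac{2x}{x+2}, \qquad x>0.
\end{talign}
This is the standard lower bound for the logarithm obtained from the logarithmic--arithmetic mean inequality. I will prove it directly by a derivative argument rather than cite it.

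Define $g(x)\defeq \ln(1+x)-\frac{2x}{x+2}$ on $[0,\infty)$. Then $g(0)=0$, and
\begin{talign}
g'(x) = \frac{1}{1+x} - \frac{4}{(x+2)^2} = \frac{(x+2)^2-4(1+x)}{(1+x)(x+2)^2} = \frac{x^2}{(1+x)(x+2)^2}.
\end{talign}
Since $g'(x)\geq 0$ for $x\ge 0$, $g$ is non-decreasing. Hence $g(x)\ge g(0)=0$ for all $x>0$, which is the reduced inequality.

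Dividing back by $\ln 2$ and writing $\frac{2x}{x+2}=\frac{2}{1+2/x}$ recovers the stated form. The only step needing care is the algebraic simplification of $g'$: the numerator collapses to $x^2$, so there is no real obstacle.

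In the application at \cref{dstar-lower-bound}, I would take $x=2/(m_n+4)$. Then $1+2/x = m_n+5$, and the lemma gives $\log_2(1+2/(m_n+4))\ge \frac{2}{(m_n+5)\ln 2}$. Substituting this matches the displayed final inequality, with the constants absorbed exactly as written there.
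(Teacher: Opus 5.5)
Your proof is correct and follows the paper's argument: both reduce the claim to $\ln(1+x)\ge \frac{2x}{x+2}$, note that the difference function vanishes at $0$, and show its derivative is $\frac{x^2}{(1+x)(x+2)^2}\ge 0$. Your check of the substitution $x=2/(m_n+4)$ is also right; it reproduces the final inequality in the paper's lower bound on $\delta_n^\star$.
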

\begin{proof}
Let $h(x)\defeq\ln(1+x)-\frac{2x}{2+x}$ for $x>-1$. Since $h(0)=0$ and
\begin{talign}
h'(x)
    =
\frac{1}{1+x}-\frac{4}{(2+x)^2}
    =
\frac{x^2}{(1+x)(2+x)^2}
    \geq 0
\qtext{for all}
    x > 0
\end{talign}
we have $h(x)\geq 0$ for all $x>0$. Hence $\log_2(1+x)\ln(2) = \ln(1+x)\geq\frac{2x}{2+x}$.
\end{proof}

Now, for each $\eta\in(0,1)$ and $n\geq 4\nout$, 
Prop.~B.2 of \citet{carrell2025low} implies that $\kh(\eta)$ is $(\kernel,\subgh[n]\!(\nin,\eta),\eta)$-sub-Gaussian with 
\begin{talign}
\subgh[n]^2\!(\nin,\eta)
    \defeq
\frac{4\kin\log(2\nin/\eta)}{\nin^2}
\end{talign}
on any input of size $\nin$ from $(\x_i)_{i=1}^n$. 
Setting $\subgh[n]^2\!(\nin) \defeq \subgh[n]^2\!(\nin,\dstar[n])$, we find that $\nin\subgh[n]\!(\nin)$ is nondecreasing in $\nin$ and $n$. Therefore, by \cref{express-quality}, the \express sub-Gaussianity bound 
\begin{talign}
\subg_{\expresstag}^2(n)
    &\defeq
\frac{16}{15}\parenth{16+3(m_n\wedge\mbar)}
\,(\subgh[n]^2\!(4\nout) + \subgh[n]^2\!(2\nout))
    +
\frac{16\kin[n]\indicator\brackets{m_n > \mbar}}{5\nout2^{\mbar}} \\
    &\leq
\frac{4}{3}\parenth{16+3(m_n\wedge\mbar)}
\,\frac{\kin\log(8\nout/\dstar[n])}{\nout^2}
    +
\frac{16\kin[n]\indicator\brackets{m_n > \mbar}}{5\nout2^{\mbar}}
\end{talign}
holds for all $n\in\naturals$
on an event $\event$ of probability at least
\begin{talign}
1-\sum_{\textup{even } m=0}^{\infty}({\frac{\delta_m}{2}+\frac{3}{8}\sum_{i=0}^{\parenth{m\wedge\mbar}-1}4^{\parenth{m\wedge\mbar}-i}\delta_{m,i}})
    &=
1-\sum_{\textup{even } m=0}^{\infty} \delta_m \\
    = 
1-\frac{\delta}{2}\sum_{\textup{even } m=0}^{\infty}
\big(\frac{1}{\log_2(m/2+2)}-\frac{1}{\log_2(m/2+3)}\big)
    &=
1-\frac{\delta}{2}.
\end{talign}
The claim~\cref{eq:khexpress-error} now follows from the estimates \cref{dstar-lower-bound}, $\mbar\leq \log_2(\nout)$, and $6\ln(2)\leq 5$.
\section{\pcref{att-err}}
\label{proof:att-err}

By \cref{khexpress-guarantees}, for all $n\in\naturals$, the weighted coreset $\xout(n)$ obtained by updating \khexpressd with $((\key_i,\val_i))_{i=1}^n$ is $(\katt,\subg_{\expresstag}(n))$-sub-Gaussian on an event $\event$ of probability at least $1-\quarter$ where $\subge(n)$ satisfies %
\begin{talign}\label{eq:katt-subg}
\!\!\!\!\!\!\!\!\!\!\subg_{\expresstag}^2(n)
    &\leq
\frac{4\parenth{16+3(m_n\wedge\mbar)}\indic{n\geq 4\nout}}{3}
\frac{\kattin\log(10\nout\mbar(m_n+5)\log_2^2(m_n/2+3)4^{\mbar})}{\nout^2} 
    +
\frac{16\kattin\indicator\brackets{m_n > \mbar}}{5\nout2^{\mbar}} \\
    &=
O\parenth{\frac{\kattin\log(\nout)\log(\nout\log(\frac{n}{\nout}))}{\nout^2}
    +
\frac{\kattin}{2^{\mbar}\nout}}.
\label{eq:katt-subg-O}
\end{talign}
Since \ours attends over $(\query_n,\key_n,\val_n)$ prior to updating its \khexpress cache $C$ with $(\key_n,\val_n)$, the weighted coreset employed by the $n$-th \wattn call is $(\katt, \subgw(n))$, sub-Gaussian on $\event$ with $\subgw(n)=\frac{n-1}{n}\subge(n-1)$.

Now fix any $n\in\naturals$. 
The sub-Gaussian attention analysis of 
\citet[Thm.~1 and Lems.~F.1, F.2, and F.3]{carrell2025low} 
implies that, on $\event$, 
\begin{talign}\label{eq:subg-attout}
\infnorm{\attouthat_n - \attout_n}
    &\leq
\subgw(n) \exp(\frac{3R_n^2}{2\sqrt{d}}) \sqrt{2\log(2(d+1)/\delta_n')} 
\end{talign}
with probability at least $1-\delta_n'$. 
Moreover, by \citet[Lem.~F.3]{carrell2025low},
\begin{talign}\label{eq:kattin}
\sqrt{\kattin[n-1]}
    \leq
\exp(\frac{R_{n-1}^2}{2\sqrt{d}})
\sqrt{\rownorm{\V_{n-1}}^2+\maxnorm{\V_{n-1}}^2}
    \leq
\exp(\frac{R_n^2}{2\sqrt{d}})
\sqrt{2}\rownorm{\V_n}.
\end{talign}
The advertised inequality now follows from the estimates \cref{eq:katt-subg,,eq:subg-attout,,eq:kattin}; the relations $\subgw(n) = \frac{n-1}{n}\subge(n-1)$, $m_{n-1}\leq m_n$, and $\kattin[n-1]\leq \kattin$; and the union bound as $\sum_{n=1}^\infty \delta_n' = \quarter$. 
The final big O expression additionally uses the estimate \cref{eq:katt-subg-O} and the inequality
\begin{talign}
\delta_n'
    &=
\half\frac{\log_2(1+1/(n+1))}{\log_2(n+1)\log_2(n+2)}
    \geq
\quarter\frac{1}{(2n+3)\log_2(n+1)\log_2(n+2)}
\end{talign}
justified by \cref{lem:log2-one-plus-lower}.

\section{\pcref{khexpress_query_cost}}
\label{proof:khexpress_query_cost}

\begin{proposition}[Cost of \wattn]
\label{khexpress_query_cost}
When attending over $(\query_i,\key_i,\val_i)_{i=1}^n$ with \ours (\cref{algo:thinformer}), the runtime contributed by \wattn satisfies, for $m_n \defeq 2\ceil{\log_4(n/(4\nout))}$,
\begin{talign}\label{eq:streaming_express_query}
\!\!\!r_{\wattntag}(n)
    \leq 
\begin{cases}
\frac{d}{2} n (n+1) & \text{if } n \leq 4\nout, \\
\begin{aligned}
&\frac{d}{2} (4\nout)(4\nout+1)
    +
\parenth{\frac{7}{2}\nout^2-\half\nout}(2^{m_n+2}-4)
\\
&\quad
    -
3\frac{m_n}{2}\nout^2
\end{aligned}
    & \text{if } 4\nout < n \text{ and } m_n\leq\mbar, \\
\frac{d}{2} (4\nout)(4\nout+1) + 6\nout(n-4\nout)
    & \text{if } 4\nout < n \text{ and } m_n>\mbar.
\end{cases}
\end{talign}
\end{proposition}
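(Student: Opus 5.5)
The plan is to count, for each call $t\in[n]$, the number of key-value pairs over which \wattn attends. Call $t$ attends over $u_{\expresstag}(t-1)+1$ points, where $u_{\expresstag}(t-1)$ is the weighted cache size after $t-1$ updates and the extra point is the incorporated token. Each attended pair costs $O(d)$, so I will charge $d$ per pair for the leading terms and write $r_{\wattntag}(n)\le d\sum_{t=1}^n (u_{\expresstag}(t-1)+1)$. The three cases of \cref{eq:streaming_express_query} then correspond to three regimes of the cache size.

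\textbf{Case $n\le 4\nout$.} Before the first \halve phase the cache is exact. The Exact phase stores every point, and \compresstwo with $m=0$ performs no halving, so $u_{\expresstag}(t-1)=t-1$. Summing $t$ over $t=1,\dots,n$ gives $\frac{d}{2}n(n+1)$.

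\textbf{Case $m_n>\mbar$.} Here I split the sum at $4\nout$. The first $4\nout$ calls are handled exactly as in the first case. For every later call, \cref{max_cache_size} from the proof of \cref{express-guarantees} gives $u_{\expresstag}<6\nout$, so each call attends over at most $6\nout$ points. The tail therefore contributes at most $6\nout(n-4\nout)$ in the appropriate units.

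\textbf{Case $m_n\le\mbar$ (the main obstacle).} This case needs the exact cache-size formula \cref{express_cache_size} rather than the uniform $6\nout$ bound, since there is no subsampling and the \compresstwo buffers fill deterministically. My approach is to sum $u_{\expresstag}$ over the Thin phases round by round.
\begin{itemize}
\item In round $m$ (even, with $q=m$) there are three Thin phases, indexed by $b=0,1,2$, each of length $2^m\nout$.
\item During Thin phase $b$, $|E|=(b+1)\nout$.
\item The \compresstwo contribution $u$ is governed by the base-4 representation in \cref{express_cache_size}: $|\cset_0|=a$ cycles through $0,\dots,\nout2^{2-q}-1$, and each digit $d_i$ cycles through $0,1,2,3$.
\end{itemize}
Averaging over a full Thin phase, the $\cset_0$ term contributes about $\frac12\nout2^{2-q}$ per step. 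Each level $i$ contributes $\nout2^{i-q}\cdot\frac{3}{2}$ per step on average, which sums to roughly $\frac32\nout(1-2^{-q})$. Multiplying by the phase length $2^m\nout$ and summing over $b$ and over even $m\le m_n$ should produce a geometric series in $4^{m/2}$. That series gives the factor $(2^{m_n+2}-4)$ with coefficient $\frac72\nout^2-\frac12\nout$, while the $-2^{1-q}$-type corrections give the $-3\frac{m_n}{2}\nout^2$ term.

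Since the stated bound is an inequality, I expect to bound the per-step cache size by a clean phase-average expression rather than track exact floor effects. I would check the constants on the first round ($m=0$ versus $m=2$) before generalizing.
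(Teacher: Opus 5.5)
Your treatment of $n\le 4\nout$ and of $m_n>\mbar$ matches the paper's argument. For $m_n\le\mbar$ your route is also the paper's: three sibling blocks per round, the cache size read off the base-4 digits in \cref{express_cache_size}, and a geometric sum over rounds. The gap is that the constants you announce for this case do not come out of that computation.

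Over a full sibling block nothing is approximate. The value $a_u$ runs through $0,\dots,\nout2^{2-s}-1$ exactly $4^{s-1}$ times, so $\sum_u a_u=2\nout^2-\frac12\nout2^s$. Each digit place sums to $\frac32\nout2^s$, so the buffered levels contribute $\frac32\nout^2(2^s-2)$. Per step this is $\frac32\nout(1-2^{1-q})$, not $\frac32\nout(1-2^{-q})$.

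At $n=\nout2^{m_n+2}$ the count is therefore an identity, and there is no slack for a ``clean phase-average'' upper bound to absorb. Two consequences:
\begin{itemize}
\item The $-\frac12$ you drop from the exact $\cset_0$ average $\frac12(\nout2^{2-q}-1)$ is precisely what produces the $\frac12\nout$ coefficient.
\item The $-3\nout^2$ per round arises as $3(2\nout^2-3\nout^2)$. It combines the $\cset_0$ term with the digit correction, so it does not come from the $-2^{1-q}$ correction alone.
\end{itemize}

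More importantly, you correctly charge call $t$ for $u_{\expresstag}(t-1)+1$ pairs, because \textsc{WtdAttention} inserts the incoming token into $\cset_0$. Keeping that $+1$, a round $s\in\{2,4,\dots,m_n\}$ costs $\frac{21}{2}\nout^2 2^s-3\nout^2+\frac32\nout 2^s$. This unrolls to the coefficient $\frac72\nout^2+\frac12\nout$ on $(2^{m_n+2}-4)$. That exceeds the stated expression by $\nout(2^{m_n+2}-4)$, exactly one per query after the first $4\nout$.

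The paper reaches $\frac72\nout^2-\frac12\nout$ only because, for rounds $s\ge 2$, it counts the bare cache size $j\nout+a_u+\sum_\ell\nout2^{\ell-s}d_{u,\ell}$ per query. That omits the incoming token, which it does count in the exact phase.

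The first-round check you propose would expose this. Take $\nout=\mbar=2$, $d=1$, $n=32$, so $m_n=2$. The calls attend over $36+(40+56+72)=204$ pairs, while the stated second case evaluates to $180$.

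So your plan, carried out faithfully, proves the second case with $+\frac12\nout$ in place of $-\frac12\nout$; that is the exact endpoint count, extended to smaller $n$ by monotonicity. It does not prove the inequality as written. Relatedly, if you charge $d$ per pair, every term must carry the factor $d$, not only the leading $\frac d2(4\nout)(4\nout+1)$. The statement and the paper's proof count bare pairs after the exact phase, which your ``appropriate units'' remark glosses over.
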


\begin{proof}
When $n\leq 4\nout$, the cache passed to the $i$-th \wattn call contains the preceding $i-1$ key-value pairs exactly, and \cref{algo:wattn} incorporates the input token before computing attention.
Thus the $i$-th query attends over $i$ points exactly, and the total query cost is $r_{\wattntag}(n)=\sum_{i=1}^n i=\half n(n+1)$.
Now suppose $n>4\nout$, so $m_n+2=2\ceil{\log_4(n/\nout)}$.
We first consider the case $m_n\leq\mbar$.
Since the total query cost is non-decreasing in $n$, it suffices to consider endpoints $n=\nout 2^s$ for $s\in 2\naturals$ with $2\leq s\leq\mbar+2$.
Let $R_s\defeq r_{\wattntag}(\nout 2^s)$ for even $s$.
For $2\leq s\leq\mbar$, the round from $\nout 2^s$ to $\nout 2^{s+2}$ consists of three sibling blocks, each containing $\nout 2^s$ queries and no subsampling.
By \cref{express_cache_size}, the query cost during sibling $j\in\braces{1,2,3}$ after $u$ points have been added to the current \compresstwo instance is
\begin{talign}
    j\nout+a_u+\sum_{\ell=1}^{s-1}\nout 2^{\ell-s}d_{u,\ell},
\end{talign}
where $u=\nout 2^{2-s}x_u+a_u$, $0\leq a_u<\nout 2^{2-s}$, and $x_u=\sum_{\ell=1}^{s-1}4^{\ell-1}d_{u,\ell}$.
Therefore,
\begin{talign}
R_{s+2}-R_s
    &=
\sum_{j=1}^3\sum_{u=0}^{\nout 2^s-1}
\parenth{j\nout+a_u+\sum_{\ell=1}^{s-1}\nout 2^{\ell-s}d_{u,\ell}}.
\end{talign}
The completed \express and sibling summaries contribute
\begin{talign}
\sum_{j=1}^3\sum_{u=0}^{\nout 2^s-1}j\nout
    =
6\nout^2 2^s.
\end{talign}
For the current \compresstwo cache, each value of $x_u\in\braces{0,\ldots,4^{s-1}-1}$ appears with $\nout2^{2-s}$ values of $a_u$, so
\begin{talign}
\sum_{u=0}^{\nout2^s-1}a_u
    &=
2\nout^2-\half\nout2^s,
\end{talign}
and, for each digit place $\ell$, the digits $0,1,2,3$ appear equally often, giving
\begin{talign}
\sum_{u=0}^{\nout2^s-1}d_{u,\ell}
    =
\frac{3}{2}\nout2^s.
\end{talign}
Combining these identities yields
\begin{talign}
\sum_{u=0}^{\nout2^s-1}
\parenth{a_u+\sum_{\ell=1}^{s-1}\nout 2^{\ell-s}d_{u,\ell}}
    =
\parenth{\frac{3}{2}2^s-1}\nout^2-\half\nout2^s.
\end{talign}
Hence
\begin{talign}
R_{s+2}-R_s
    &=
\frac{21}{2}\nout^2 2^s-3\nout^2-\frac{3}{2}\nout2^s.
\end{talign}
Unrolling over $s=2,4,\ldots,m_n$ gives
\begin{talign}
R_{m_n+2}
    &=
\half(4\nout)(4\nout+1)
    +
\sum_{k=1}^{m_n/2}\braces{\frac{21}{2}\nout^2 2^{2k}-3\nout^2-\frac{3}{2}\nout2^{2k}} \\
    &=
\half(4\nout)(4\nout+1)
    +
\parenth{\frac{7}{2}\nout^2-\half\nout}(2^{m_n+2}-4)
    -
3\frac{m_n}{2}\nout^2.
\end{talign}
Since $r_{\wattntag}(n)\leq R_{m_n+2}$, this proves the second case of \cref{eq:streaming_express_query}.
For the remaining case, $m_n>\mbar$ and each query after the exact phase attends over at most the weighted cache returned by \express, which has size at most $6\nout$ by \cref{max_cache_size}.
Thus
\begin{talign}
r_{\wattntag}(n)
    \leq
\half(4\nout)(4\nout+1)+6\nout(n-4\nout),
\end{talign}
which proves the final case.
\end{proof}

\section{\oexpress}
\label{app:express_recursive_storage_cost}

In this section we introduce and analyze an offline version of \express that can be used when the input points $(\x_i)_{i=1}^n$ are all observable in advance.
\subsection{\osubsample}
\label{sec:subsample-recursive}

The streaming \subsample algorithm of \cref{algo:subsample} is equivalent to the following offline procedure (detailed in \cref{algo:subsample-recursive}): partition the input sequence of length $2^m\nout$ into $2^{m \wedge \mbar}\nout$ equal-sized groups and pick one uniformly random point from each group.

\begin{algorithm2e}[H]
\caption{\small \osubsample%
}
\label{algo:subsample-recursive}
\small
\KwIn{Halving level $m \in \naturals$, inflation factor $\mbar\in\naturals$, $\nin$ input points $\xin$}%
\KwOut{Stratified subset of size $\min(\nin, \nin/2^{m-\mbar})$}
\BlankLine
\lIf{$m \le \mbar$}{\KwRet{$\xin$}}
Partition $\xin$ into $G$ groups $(\coreset[m,i]_{\subsampletag})_{i=1}^G$, each of size $2^{m - \mbar}$\\
\For{$i = 1, \ldots, G$}{
    $s_i \gets$ uniformly draw one point from $\coreset[m,i]_{\subsampletag}$
}
\KwRet{$(s_i)_{i=1}^G$}
\end{algorithm2e}

\subsection{\ocompresstwo}
\label{sec:compress2-recursive}
The streaming \compresstwo algorithm of \cref{algo:compress2} is equivalent to the following recursive offline procedure (detailed in \cref{algo:compress2-recursive}): partition the input into $4^{m-1}$ groups of equal size, halve each group, concatenate the outputs, and recurse. The base case ($m=0$) returns the input unchanged.
\begin{algorithm2e}[H]
\caption{\small \ocompresstwo}%
\label{algo:compress2-recursive}
\small
\KwIn{Halving level $m \in \naturals$, $\nin=2^m\nout$ input points $\xin$, $\halve=\halve_{0:(m-1)}$}
\KwOut{Coreset of size $\nout$}
\BlankLine
\lIf{$m = 0$}{\KwRet{$\xin$}}
Partition $\xin$ into $4^{m-1}$ groups $(\coreset[m,k])_{k=1}^{4^{m-1}}$, each of size $\nin / 4^{m-1}$\\
\KwRet{$\ocompresstwo\big(m-1,\, \bigcup_{k=1}^{4^{m-1}} \halve_0(\coreset[m,k]),\, \halve_{1:(m-1)}\big)$}
\end{algorithm2e}

\subsection{Offline-\express}\label{sec:express-recursive}

The streaming \express algorithm of \cref{algo:express} is equivalent to the recursive offline procedure detailed in \cref{algo:express-recursive}.

\begin{algorithm2e}[H]
\caption{\small \oexpress}
\label{algo:express-recursive}
\small
\KwIn{Target cache size $\nout$, inflaction factor $\mbar$, $\nin=2^{m+2}\nout$ input points $\xin$ with even $m\geq -2$, $\halve$}%
\KwOut{Coreset of size $\nout$}
\BlankLine

\lIf{$\nin \leq \nout$}{\KwRet{$\xin$} \hfill// {Exact phase: preserve the first $\nout$ points}
}

\BlankLine
Partition $\xin$ into $4$ groups $(\inputcoreset^{(m,j)})_{j=0}^{3}$, each of size $\nin / 4$\\

\BlankLine
// {Process first group with \oexpress}\\
$\coreset[m]_{\expresstag} \gets \oexpress(\nout, \mbar, \inputcoreset^{(m,0)}, \halve)$ 

\BlankLine

// {Thin phase: process remaining groups with \osubsample + \ocompresstwo} \\
\For{$j = 1,2,3$}{
    $\coreset[m,j] \gets \hyperref[algo:subsample-recursive]{\color{black}{\osubsample}}(m, \mbar, \nout, \inputcoreset^{(m,j)})$ \tcp*{$|\coreset[m,j]| = 2^{m \wedge \mbar}\nout $} %
    $\coreset[m,j]_{\compresstag} \gets \ocompresstwo(m \wedge \mbar,\, \coreset[m,j],\, \halve_m)$ \tcp*{$|\coreset[m,j]_{\compresstag}| = \nout$} %
    \label{line:express-recursive-compress2}
}

\BlankLine
// {\halve phase}\\
\KwRet{$\coreset[m+2]_{\expresstag}\defeq\halve_{m,m}(\halve_{m,m}(\coreset[m]_{\expresstag} \cup \coreset[m,1]_{\compresstag} \cup \coreset[m,2]_{\compresstag} \cup \coreset[m,3]_{\compresstag}))$ \tcp*{$|\coreset[m+2]_{\expresstag}| = \nout$}}\label{line:express-recursive-warm-halve}
\end{algorithm2e}

When the input points $\xin$ are observed in advance, we can implement \cref{algo:express-recursive} by processing the three \compresstwo calls in each round sequentially, while running the \halve calls at each level of \cref{algo:compress2-recursive} in parallel.
The next result bounds the storage cost of such a procedure.
\begin{proposition}[Storage cost of \oexpress]
\label{express_recursive_storage_cost}
Suppose \halve with input size $\nin$ uses at most $s_{\halvetag}(\nin)$ space for nondecreasing $s_{\halvetag}$.
Excluding input point storage, \oexpress (\cref{algo:express-recursive}) can be implemented using at most
\begin{talign}\label{eq:offline_express_storage_cost}
s_{\expresstag}(\nin) \leq
\indic{\nin > \nout} (4\nout + h_{\nin})
\end{talign}
space where $m_{n} \defeq 2\ceil{\log_4(n/(4\nout))}$, $q_n \defeq m_n\wedge\mbar$, and
\begin{talign}
h_{n}
    \defeq
\max\braces{
    \max_{0\leq q\leq q_{n}}\max_{0\leq i<q} 4^{q-1-i}s_{\halvetag}(\nout 2^{2-q+i}),
    s_{\halvetag}(4\nout)
}.
\end{talign}
\end{proposition}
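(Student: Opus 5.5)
We argue by strong induction on the even level $m$, with $\nin = 2^{m+2}\nout$, while tracking the peak memory over the sequence of operations in \cref{algo:express-recursive}. The base case $\nin\leq\nout$ stores nothing beyond the input, so the cost is $0$. This matches the indicator in \cref{eq:offline_express_storage_cost}.

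For the inductive step, we consider a single call at level $m$ and split its execution into three stages.
\begin{enumerate}
\item \emph{The recursive call on $\inputcoreset^{(m,0)}$.} By induction this uses at most $4\nout + h_{\nin/4}$ space, and $h_{\nin/4}\leq h_{\nin}$ because $q_{\nin/4}\leq q_{\nin}$. Once it returns, only its output $\coreset[m]_{\expresstag}$ of size $\nout$ is kept.
\item \emph{The three thin phases, run sequentially.} During phase $j$ we hold $\coreset[m]_{\expresstag}$ together with the $j-1\leq 2$ completed outputs $\coreset[m,j']_{\compresstag}$, each of size $\nout$. Since \osubsample selects indices, it needs only $O(1)$ extra space if the sampled points are referenced by index into the stored input.
\item \emph{The final \halve phase.} Its input has size $4\nout$. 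Because $s_{\halvetag}$ is nondecreasing, the two calls $\halve_{m,m}$ cost at most $s_{\halvetag}(4\nout)$, and this workspace is reused by the second call.
\end{enumerate}

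The key step is bounding the memory of \ocompresstwo at level $q=m\wedge\mbar\leq q_{\nin}$ when all \halve calls within a recursion level run in parallel. At recursion depth $i$ there are $4^{q-1-i}$ groups, each of size $\nout 2^{2-q+i}$. Running them in parallel costs $4^{q-1-i}s_{\halvetag}(\nout2^{2-q+i})\leq h_{\nin}$ of workspace. The outputs of that level have total size $2^{q-i-1}\nout\leq \nout$, with equality only at the last level. The earlier levels' outputs can be freed once the next level is formed, or overwritten in place. This is the subtle point: the intermediate coreset at depth $i$ has size $2^{q-i}\nout$, which can be as large as $2^q\nout$. To keep it within budget, I would store intermediate coresets as index lists into the input (double dereferencing, as in the Triton implementation), so their key/value storage is not counted. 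Alternatively, I would charge them to the input storage that the statement explicitly excludes.

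Combining the stages, the peak memory is at most $3\nout$ of retained coresets plus $\nout$ for the output currently being produced, plus a single workspace term bounded by $h_{\nin}$. This gives $4\nout + h_{\nin}$. The final halve stage fits the same bound, since it uses $4\nout$ of coreset storage plus $s_{\halvetag}(4\nout)\leq h_{\nin}$.

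The main obstacle is the accounting for intermediate \compresstwo outputs described above. If index-only storage is not allowed, I would instead try to show that each halving can overwrite its input buffer inside the excluded input storage.
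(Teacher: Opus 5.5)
Your decomposition is the paper's. It has at most $4\nout$ retained summaries ($\coreset[m]_{\expresstag}$ plus the sibling outputs, processed sequentially) and a single temporary workspace bounded by $h_{\nin}$. It uses $q=m\wedge\mbar\le q_{\nin}$ for the parallel \textsc{Halve} levels of Offline-\textsc{Compress2}, and monotonicity of $s_{\halvetag}$ for the final \textsc{Halve} phase. Your explicit induction over the recursive call (with $h_{\nin/4}\le h_{\nin}$) only spells out what the paper leaves implicit.

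The gap is exactly the step you flag as the main obstacle and leave open. First, two corrections:
\begin{itemize}
\item The depth-$i$ outputs have total size $2^{q-1-i}\nout\ge\nout$, not $\le\nout$.
\item The \textsc{Subsample} output is $2^{q}\nout$ indices, not $O(1)$ extra space.
\end{itemize}
Neither of your proposed fixes closes this. Declaring index lists free is inconsistent with the statement's own accounting. The $4\nout$ term already counts pointers (the paper stores the summaries ``using pointers to at most $4\nout$ data points''), so if indices were free that term would be superfluous. Overwriting the input buffer means mutating the input keys and values, which is an implementation assumption the hypotheses do not provide.

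The missing idea is that $s_{\halvetag}(\nin)$ is the total space of a \textsc{Halve} call, including its pointers to its own input and output. The paper's follow-up remark takes $s_{\halvetag}(\nin)=\nin$ precisely ``to store pointers to its input points.'' Some such convention is needed: if $s_{\halvetag}$ measured only scratch space and inputs were immutable, the $2^{q}\nout$ subsample indices alone could exceed $4\nout+h_{\nin}$.

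With this reading the accounting closes directly:
\begin{itemize}
\item At depth $i$, the intermediate coreset of size $2^{q-i}\nout$ is exactly the union of the inputs to the $4^{q-1-i}$ parallel \textsc{Halve} calls on groups of size $\nout 2^{2-q+i}$. For $i=0$ this coreset is the \textsc{Subsample} output.
\item It is therefore charged to $4^{q-1-i}s_{\halvetag}(\nout 2^{2-q+i})\le h_{\nin}$.
\item Each depth's output becomes the next depth's input and is charged there. The last-level output of size $\nout$ lies inside $s_{\halvetag}(2\nout)$ until it joins the retained summaries.
\end{itemize}
This is why $h$ carries the multiplicative factor $4^{q-1-i}$, and why the remark recovers $2^{q}\nout$ at $i=0$. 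With this convention your stage-by-stage count gives $4\nout+h_{\nin}$ without any index-only or in-place device.
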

\begin{proof}
In the exact phase (with $\nin\leq \nout$), \cref{algo:express-recursive} uses no additional storage.
Now suppose $\nin>\nout$.
Prior to the \halve phase, \cref{algo:express-recursive} stores the \oexpress coreset $\coreset[m]_{\expresstag}$ and the three completed \ocompresstwo sibling coresets $\coreset[m,j]_{\compresstag}$ using pointers to at most $4\nout$  data points.
For a \ocompresstwo call at level $q\leq q_{\nin}$, recursion level $i$ runs $4^{q-1-i}$ calls to \halve in parallel, each with input size $\nout 2^{2-q+i}$.
Thus the parallel compression workspace is bounded by
\begin{talign}
\max_{0\leq q\leq q_{\nin}}\max_{0\leq i<q} 4^{q-1-i}s_{\halvetag}(\nout 2^{2-q+i}).
\end{talign}
The two \halve calls made during the \halve phase have input sizes $4\nout$ and $2\nout$, so the monotonicity of $s_{\halvetag}$ bounds their temporary workspace by $s_{\halvetag}(4\nout)$.
Adding the stored summaries and the largest temporary workspace gives \cref{eq:offline_express_storage_cost}.
\end{proof}
\begin{remark}[Offline storage cost with linear-space \halve]
If \kh uses $s_{\halvetag}(\nin)=\nin$ space (e.g., to store pointers to its input points), then, apart from input point storage, \cref{algo:express-recursive} can be implemented using at most
\begin{talign}
4\nout + \max(2^{m_{\nin}\wedge \mbar} ,4)\nout
\end{talign}
space where $m_{n} \defeq 2\ceil{\log_4(n/(4\nout))}$.
\end{remark}

\section{Supplementary Experiment Details}
\label{sec:experiment_supplement}

Each experiment was run on an Ubuntu 24.04.4 LTS node with $4$ Intel Xeon Gold 5218 cores @ 2.30 GHz, 64 GB RAM, and a single NVIDIA RTX A6000 GPU (48 GB memory, CUDA 12.8, driver version 570.124.06). In all experiments, \ours used $\delta=\half$.

\subsection{Configurations for accelerating prefill}
\label{app:attention_configs}

The accelerating prefill experiments of \cref{fig:context-runtime} were run using Python 3.12.11 and PyTorch 2.8.0.
The implementation and default settings of HyperAttention were taken from \texttt{hyper-attn},\footnote{\url{https://github.com/insuhan/hyper-attn}} and our experiment builds on this open-source repository.
For the LongBench-E prefill experiment in \cref{fig:masked-prefill-32k}, we use \texttt{zai-org/chatglm2-6b-32k}.\footnote{\url{https://huggingface.co/zai-org/chatglm2-6b-32k}}
For the single-layer runtime experiment in \cref{fig:unmasked-prefill}, we replace the second attention layer in ChatGLM2-6B-32K with Thinformer and measure prefill runtime over varying context lengths.
We report speedup of the methods over \texttt{torch.nn.functional.scaled\_dot\_product\_attention} with the FlashAttention 2 backend.
We use $100$ warm-up runs and $20$ timed runs for all accelerating prefill experiments.
We set $\mbar=\errsymb-2$ for \ours.

\begin{table}[h!]
    \centering
    \caption{\tbf{Configurations for the attention methods of \cref{fig:unmasked-prefill}.}}
    \begin{tabular}{cc}
        \toprule
        {\bf Attention Algorithm} & {\bf Configuration}
        \\\midrule
        \multirow{1}{*}{\bf Thinformer (Original)} &
        \begin{tabular}[c]{@{}c@{}}
        $\nout=256$
        \end{tabular}
        \\[1mm]
        \multirow{1}{*}{\bf Thinformer (Triton)} &
        \begin{tabular}[c]{@{}c@{}}
        $\nout=256$
        \end{tabular}
        \\
        \bottomrule
    \end{tabular}
    \label{tab:unmasked-prefill-configs}
\end{table}

\begin{table}[h!]
    \centering
    \caption{\tbf{Configurations for the attention methods of \cref{fig:masked-prefill}.}}
    \begin{tabular}{cc}
        \toprule
        {\bf Attention Algorithm} & {\bf Configuration}
        \\\midrule
        \multirow{1}{*}{\bf HyperAttention} &
        \begin{tabular}[c]{@{}c@{}}
        \texttt{lsh\_num\_projs=7},
        \texttt{block\_size=256}, \\
        \texttt{sample\_size=256},
        \texttt{min\_seq\_len=4096}, \\
        \texttt{n\_sinks=0},
        \texttt{n\_window=0}
        \end{tabular}
        \\[1mm]
        \multirow{1}{*}{\bf \ours (Ours)} &
        \begin{tabular}[c]{@{}c@{}}
        \texttt{n\_sinks=0},
        \texttt{n\_window=0}
        \end{tabular}
        \\
        \bottomrule
    \end{tabular}
    \label{tab:masked-prefill-configs}
\end{table}

\begin{table}[h!]
    \centering
    \caption{\tbf{Configurations for the attention approximation methods of \cref{fig:masked-prefill-32k}.}}
    \begin{tabular}{cc}
        \toprule
        {\bf Attention Algorithm} & {\bf Configuration}
        \\\midrule
        \multirow{1}{*}{\bf HyperAttention} &
        \begin{tabular}[c]{@{}c@{}}
        \texttt{lsh\_num\_projs=7},
        \texttt{block\_size=256}, \\
        \texttt{sample\_size=256},
        \texttt{min\_seq\_len=4096}, \\
        \texttt{n\_sinks=32},
        \texttt{n\_window=32}
        \end{tabular}
        \\[1mm]
        \multirow{1}{*}{\bf \ours (Ours)} &
        \begin{tabular}[c]{@{}c@{}}
        \texttt{n\_sinks=32},
        \texttt{n\_window=32}
        \end{tabular}
        \\
        \bottomrule
    \end{tabular}
    \label{tab:attention-configs}
\end{table}

\subsection{Configurations for accelerating KV cache compression}
\label{app:kv_cache_configs}

The KV cache compression experiments of \cref{fig:kv-cache-compression} were run using Python 3.13.9 and PyTorch 2.10.0.
We use \texttt{meta-llama/Llama-3.1-8B-Instruct} for all KV cache compression experiments.\footnote{\url{https://huggingface.co/meta-llama/Llama-3.1-8B-Instruct}}
We use greedy decoding for all methods and set \texttt{max\_new\_tokens} per dataset following the LongBench-E settings of \citet{devoto2025expectedattention}. The implementations and default settings of all baseline methods were taken from \texttt{kvpress} version 0.3.0,\footnote{\url{https://github.com/NVIDIA/kvpress/tree/main}} and our experiment builds on this open-source repository.

During prefill, PyramidKV, StreamingLLM, and SnapKV use \texttt{torch.nn.functional.scaled\_dot\_product\_attention} with the FlashAttention 2 backend to compute the keys and values exactly; PyramidKV-Express, StreamingLLM-Express, and SnapKV-Express use \ours with \texttt{n\_sink=32} and \texttt{n\_window=32} to compute the approximate context keys and values.
Runtime in \cref{fig:kv-cache-compression} is the sum of key-value computation during prefill, KV cache compression, and query attention during decoding over all $32$ layers.
We use one warm-up run and one timed run for all accelerating KV cache compression experiments.
We set $\mbar=\errsymb-2$ for \ours.

\begin{table}[h!]
    \centering
    \caption{\tbf{Configurations for the attention approximation methods of \cref{fig:kv-cache-compression}.}}
    \begin{tabular}{cc}
        \toprule
        {\bf Attention Algorithm} & {\bf Configuration}
        \\\midrule
        \multirow{1}{*}{\bf PyramidKV} &
        \begin{tabular}[c]{@{}c@{}}
        \texttt{window\_size=64},
        \texttt{kernel\_size=5}, \\
        \texttt{beta=20}
        \end{tabular}
        \\[1mm]
        \multirow{1}{*}{\bf StreamingLLM} & \texttt{n\_sink=4}
        \\[1mm]
        \multirow{1}{*}{\bf SnapKV} &
        \begin{tabular}[c]{@{}c@{}}
        \texttt{window\_size=64},
        \texttt{kernel\_size=5}
        \end{tabular}
        \\
        \bottomrule
    \end{tabular}
    \label{tab:kv-cache-configs}
\end{table}

\subsection{Configurations for long-form decoding}
\label{app:long_form_configs}

The long-form decoding experiments of \cref{fig:long-form} were run using Python 3.13.9 and PyTorch 2.10.0.
For the cache size-accuracy trade-off curve, we sweep over the target cache sizes $\{250, 500, 1\mathrm{k}, 1.5\mathrm{k}, 2\mathrm{k}, 3\mathrm{k}, 4\mathrm{k}\}$ for all non-\ours baselines. 
To form the curves in \cref{fig:long-form} (left), we vary the threshold and compute the average accuracy and realized cache size over all math problems, where for each threshold and problem, we use the target cache size (for alternative methods) or the $\errsymb$ value (for \ours) with the largest realized cache size less than the given threshold.
For \ours, we sweep over $\errsymb \in \{6, 7, 8, 10, 12\}$, corresponding to target cache sizes $\nout=2^{\errsymb}$.
For the accuracy-time trade-off curve, we sweep over the target cache sizes $\{1\mathrm{k}, 1.5\mathrm{k}, 2\mathrm{k}, 3\mathrm{k}, 4\mathrm{k}\}$ for Expected Attention, SnapKV, and KeyDiff and $\{1.5\mathrm{k}, 2\mathrm{k}, 3\mathrm{k}, 4\mathrm{k}\}$ for StreamingLLM and Knorm.
We generate with \texttt{max\_new\_tokens=32768} and use the recommended sampling parameters of \texttt{temperature=0.6} and \texttt{top\_p=0.95} from the official DeepSeek-R1-Distill-Llama-8B model card.\footnote{\url{https://huggingface.co/deepseek-ai/DeepSeek-R1-Distill-Llama-8B}}
We evaluate on the MATH-500 dataset\footnote{\url{https://huggingface.co/datasets/HuggingFaceH4/MATH-500}} \citep{hendrycks2021measuring,lightman2023let} and score answers using the grader from the PRM800K codebase\footnote{\url{https://github.com/openai/prm800k}} of \citet{lightman2023let}.

We use \texttt{torch.nn.functional.scaled\_dot\_product\_attention} with the FlashAttention 2 backend for all exact attention calls.
Runtime in \cref{fig:long-form} (right) is the sum of exact prefill attention, query attention during decoding, and compression time during decoding over all $32$ layers. Because MATH-500 prompts are short, we compute the context keys and values with exact attention for all methods and apply the attention approximation methods only during decoding.
We use one warm-up run and one timed run for all long-form decoding experiments.
We set $\mbar=\errsymb$ for \ours.

\begin{table}[h!]
    \centering
    \caption{\tbf{Configurations for the attention approximation methods of \cref{fig:long-form}.}}
    \begin{tabular}{cc}
        \toprule
        {\bf Attention Algorithm} & {\bf Configuration}
        \\\midrule
        \multirow{1}{*}{\bf StreamingLLM} &
        \begin{tabular}[c]{@{}c@{}}
        \texttt{n\_sink=4},
        \texttt{compression\_interval=1}
        \end{tabular}
        \\[1mm]
        \multirow{1}{*}{\bf SnapKV} &
        \begin{tabular}[c]{@{}c@{}}
        \texttt{window\_size=64},
        \texttt{kernel\_size=5}, \\
        \texttt{compression\_interval=128}
        \end{tabular}
        \\[1mm]
        \multirow{1}{*}{\bf ExpectedAttention} &
        \begin{tabular}[c]{@{}c@{}}
        \texttt{n\_future\_positions=512},
        \texttt{n\_sink=4}, \\
        \texttt{use\_covariance=True},
        \texttt{use\_vnorm=True}, \\
        \texttt{epsilon=0.01},
        \texttt{compression\_interval=128}
        \end{tabular}
        \\[1mm]
        \multirow{1}{*}{\bf KeyDiff} & \texttt{compression\_interval=128}
        \\[1mm]
        \multirow{1}{*}{\bf Knorm} & \texttt{compression\_interval=128}
        \\[1mm]
        \multirow{1}{*}{\bf \ours (Ours)} &
        \begin{tabular}[c]{@{}c@{}}
        \texttt{n\_sink=32},
        \texttt{n\_window=32}
        \end{tabular}
        \\
        \bottomrule
    \end{tabular}
    \label{tab:long-form-configs}
\end{table}

\end{document}